\newcommand{\qed}{\hfill \ensuremath{\Box}}
\newtheorem{lemmx}{Lemma}
\newtheorem{thmx}{Theorem}
\DeclareMathOperator*{\argmax}{arg\,max}
\newenvironment{proof}{{\bf Proof:}}{\qed}
\begin{document}

\title{COLOGNE: Coordinated Local Graph Neighborhood Sampling 
}


\author{Konstantin Kutzkov
\\ \footnotesize \href{mailto:kutzkov@gmail.com}{kutzkov@gmail.com} 
}


\date{}

\maketitle

\begin{abstract}
Representation learning for graphs enables the application of standard machine learning algorithms and data analysis tools to graph data. Replacing discrete unordered objects such as graph nodes by real-valued vectors is at the heart of many approaches to learning from graph data. Such vector representations, or embeddings, capture the discrete relationships in the original data by representing nodes as vectors in a high-dimensional space.
 
In most applications graphs model the relationship between real-life objects and often nodes contain valuable meta-information about the original objects. While being a powerful machine learning tool, embeddings are not able to preserve such node attributes. We address this shortcoming and consider the problem of learning discrete node embeddings such that the coordinates of the node vector representations are graph nodes. This opens the door to designing interpretable machine learning algorithms for graphs as all attributes originally present in the nodes are preserved.

We present a framework for coordinated local graph neighborhood sampling (COLOGNE) such that each node is represented by a fixed number of graph nodes, together with their attributes. Individual samples are coordinated and they preserve the similarity between node neighborhoods. We consider different notions of similarity for which we design scalable algorithms. We show theoretical results for all proposed algorithms. Experiments on benchmark graphs evaluate the quality of the designed embeddings and demonstrate how the proposed embeddings can be used in training interpretable machine learning algorithms for graph data.

%
\end{abstract}

\section{Introduction}
\label{intro}
Graphs are ubiquitous representation for structured data. They model naturally occurring relations between objects and, in a sense, generalize sequential data to more complex dependencies. Not surprising, many algorithms originally designed for learning from sequential data are generalized to learning from graph data. Learning vector representations of individual graph nodes, or {\em node embeddings}, is such an example where approaches to learning word representations from natural language text have inspired learning from graph data. Node embeddings have become an integral part of the graph learning toolbox, with applications ranging from link prediction~\citep{deepwalk,node2vec} to graph compression~\citep{compression}.

The first presented algorithm~\citep{deepwalk} for learning node embeddings works by generating random walks, starting from each node $u$ in the given graph, and then feeding the sequences of visited nodes into a word embedding learning algorithm such as word2vec~\citep{word2vec}. Later, these approaches were extended to a more general setting where different neighborhood definitions are supported and one can sample from these neighborhoods~\citep{line,node2vec,verse}. 
%
Algorithms based on random walks generate samples that are independent for different nodes, i.e., a random walk starting from a node $u$ is likely to be very different from a random walk starting at another node $v$, even if $u$ and $v$ have very similar neighborhoods, for some intuitive notion of similarity. By generating a large number of random walks the {\em sets} of sampled nodes for $u$ and $v$ will eventually reflect the similarity between $u$ and $v$ but individual samples are much less likely to preserve the similarity.  As an illustration, consider a large social network such as \texttt{twitter} where nodes represent users and edges who-follows-whom relationships. Consider two nodes corresponding to the famous football managers J\"urgen Klopp and Pep Guardiola. Maybe they don't have so many followers in common as they represent football rivals. Let $K$ and $G$ be Klopp's and Guardiola's 2-hop neighborhood local graphs, respectively. We might expect that the nodes in these ``friends of friends'' graphs, $V(K)$ and $V(G)$, will have a large overlap as they likely capture the majority of English football fans. However, if we sample nodes at random from $V(K)$ and $V(G)$, then most likely the sampled nodes will be different. But assume we were able to randomly permute the nodes in $V(K) \cup V(G)$ and then return the first node from $V(K)$ and $V(G)$, according to the total order defined by the permutation, as samples for Klopp and Guardiola. Then it would be much more likely to get the same node. In the case of random walks the probability for identical samples is $\frac{|V(G) \cap V(K)|}{|V(G)||V(K)|}$ and in the case of random permutations it is $\frac{|V(G) \cap V(K)|}{|V(G) \cup V(K)|}$, i.e., the samples already capture that Klopp and Guardiola are similar twitter users. And if we also consider the profiles of the sampled nodes, say by analyzing the activity of the corresponding users, then we might be able to infer that Klopp and Guardiola are Premier League managers. 

The main intuition behind  {\em coordinated} sampling is that each sample is an independent estimator of the similarity between nodes and thus sampled nodes themselves can be coordinates of the embedding vectors. This has two major advantages over continuous embeddings. First, we avoid the need to train a model that computes continuous embeddings. Thus, if the underlying sampling procedure is efficient, the approach can be highly scalable and simplify machine learning pipelines that work with node embeddings. Second, and probably more important, the samples are the original graph nodes. Usually graphs model real-life problems and graph nodes contain various kinds of additional information, be it personal data of users of a social network or the weather conditions at railway stations. By sampling, all this information is preserved which can lead to prediction models that are easier to interpret by a human expert.
%
%
\\\\
The main contributions of the paper can be summarized as follows:
\begin{itemize}
\item {\bf Coordinated local neighborhood sampling.} We formally define the problem of coordinated local sampling and present scalable algorithms with well understood theoretical properties.  The algorithms scale almost linearly with the graph size and yield samples that preserve the similarity between neighborhood nodes with respect to different objectives.

\item {\bf Interpretable embeddings.} The main motivation behind coordinated sampling is that it yields embeddings consisting of neighborhood nodes themselves. We show on real graphs how the information stored in nodes can be used to design interpretable machine learning models using embeddings consisting of coordinated samples.

\end{itemize}
\section{Organization of the paper} 
\label{sec:org}

In the next section we present notation and overview of techniques. In Section~\ref{sec:cologne} we first describe the overall structure of the proposed approach. We then present three algorithms for local neighborhood sampling according to different objectives. For each algorithm we also give theoretical results about the computational complexity of the approach and the properties of the returned samples. We discuss related work in Section~\ref{sec:previous}. An experimental evaluation on real graphs is presented in Section~\ref{sec:exp}. The paper is concluded in Section~\ref{sec:concl}.

\section{Notation and overview of techniques} \label{sec:overview}

We assume the input is a graph $G = (V, E)$ over $n=|V|$ nodes and $m=|E|$ edges. The distance $d(u,v)$ between nodes $u$ and $v$ is the minimum number of edges that need to be traversed in order to reach $u$ from $v$, i.e., the shortest path from $u$ to $v$. We consider undirected graphs, thus $d(u, v) = d(v, u)$. Also, we assume connected graphs, thus $d(u, v) < \infty$ for all $u, v \in V$. These assumptions are however only for the ease of presentation, all algorithms work for directed graphs and graphs with more than one connected component. The $k$-hop neighbors of node $u$ is the set of nodes $N_k(u) = \{v \in V: d(u, v) \le k\}$. The set of neighbors of node $u$ is denoted as $N(u)$. We call the subgraph induced by $N_k(u)$ the local $k$-hop neighborhood of $u$. The degree of node $u$ in graph $G=(V, E)$ is $deg(u)=|\{(u, v) \in E\}|$.

An $1\pm \varepsilon$-approximation of a quantity $q$ is another quantity $\tilde{q}$ such that $(1-\varepsilon)q \le \tilde{q} \le (1+\varepsilon)q$.
\paragraph{Coordinated sampling}
Given a universe of elements $U$, and a a set of sets $\{S_i \subseteq U\}$, the goal is draw a number of samples from $U$ such that each set $S_i$ is represented by a compact summary $sketch_{S_i}$ such that $\texttt{sim}(sketch_{S_i}, sketch_{S_j}) \approx \texttt{sim}(S_i, S_j)$, i.e., the summaries approximately preserve the similarity between the original sets, for different similarity measures. As an example, for a graph $G=(V, E)$ we can have $U = V$ and the sets $S_i$ be the neighbors of individual nodes. 
\paragraph{$L_p$ sampling for graph nodes}
The $p$-norm of vector $x \in \mathbb{R}^n$ is $\|x\|_p = (\sum_{i=1}^n x_i^p)^{1/p}$ for $p \in \mathbb{N}\cup \{0\}$. \footnote{The $0$-norm, counting the number of nonzero coordinates in $x$, is not a norm in the strict mathematical sense but the notation has become standard.} We call $L_p$ sampling a sampling procedure that returns each coordinate $x_i$ from vector $x$ with probability $\frac{|x_i|^p}{\|x\|_p^p}$.

\paragraph{Coordinated local graph neighborhood sampling} 
Let $\mathbf{f}^k_u$ be the $k$-hop frequency vectors of node $u$ such that $\mathbf{f}^k_u[z]$ the number of unique paths of length at most $k$ from $u$ to $z$. 
Let $s_u \in N_k(u)$ be the node returned by an algorithm $\mathcal{A}$ as a sample for node $u$. We say that $\mathcal{A}$ is a coordinated sampling algorithm with respect to a similarity measure $\texttt{sim}: V \times V \rightarrow [0,1]$ iff $$\Pr[s_u = s_v] \sim \texttt{sim}(\mathbf{f}^k_u, \mathbf{f}^k_v) \text{ for } u, v \in V$$

The objective of the present work is the design of scalable algorithms for coordinated $L_p$ sampling with rigorously understood properties. We can also phrase the problem in graph algebra terms. 
Let $A \in \{0,1\}^{n\times n}$ be the adjacency matrix of the graph. The objective is to implement coordinated $L_p$ sampling from each row of $ M_k = \sum_{i=0}^kA^i$  without explicitly generating $A^i$ where $A^0=I$. Note that it holds $M_k[u, z] = \mathbf{f}^k_u[z]$.

\paragraph{Sketch based coordinated sampling} Our algorithms will build upon sketching techniques for sampling from data streams. In a nutshell, sketching represents a massive input vector $x \in \mathbb{R}^n$ with a compact data structure $sketch_x \in \mathbb{R}^d$, $d \ll n$, that approximately preserves many properties of the original $x$. We want to sample from $sketch_x$, according to some distribution, such that the returned value is distributed as if we have sampled from $x$. We will apply sketching to summarize the local $k$-hop neighborhood frequency vector of each node and this will allow us to design efficient algorithms. The samples are coordinated by sharing the same random seed across neighborhoods. Denote by $s_u \in V$ the node selected as a sample for a node $u \in V$. Thus, for a node $x \in N_k(u) \cap N_k(v)$ the events $x=s_u$ and $x=s_v$ are not independent. If $s_u=x$, then it is more likely that we will also select $x$ as a sample for $v$. This is in contrast to random walks which are independent of each other. 

\section{COLOGNE sampling} \label{sec:cologne}

The general form of our approach is given in Figure~\ref{fig:main_alg}. We first initialize a sketch at each node $u$ with the node $u$ itself. Then for $k$ iterations for each node we collect the sketches from its neighbor nodes and aggregate them into a single sketch. At the end we sample from the sketch at each node. In this way we have aggregated the sketches from the $k$-hop neighborhood $N_k(u)$ for each node $u \in V$.

As a simple example assume that the sketch is a frequency vector $\mathbf{f}^k_u$ such that $\mathbf{f}^k_u[v]$ counts how many times a node $v$ occurs in $N_k(u)$. For each node $u$, $sketch_u$ is initialized by a sparse $\{0,1\}$-valued vector $\mathbf{f}_u$ such that $\mathbf{f}_u[v]=1$ iff $v=u$, i.e., there is exactly one nonzero coordinate. The aggregation is entrywise vector addition of all neighborhood frequency vectors. The next lemma formally shows that after $k$ iterations the value $sketch_u[v]$ is exactly the number of different paths from $u$ to $v$ of length {\em at most} $k$. 

\begin{lemmx} \label{lem:freq}
Let $G$ be a graph over $n$ nodes. Let $\mathbf{f}^{k}_u \in \mathbb{N}^n$ be the frequency vector collected at node $u$ after $k$ iterations of the  COLOGNE algorithm. Then $\mathbf{f}^{k}_u[v]$ is the number of unique walks from $u$ to $v$ of length at most~$k$.
\end{lemmx}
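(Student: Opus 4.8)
The plan is to prove the statement by induction on the iteration count $k$, after first reading off the recurrence that the COLOGNE aggregation realizes. By construction the sketch is initialized with $\mathbf{f}^0_u=\mathbf{f}_u$, the indicator of $u$, and at each iteration the sketch at $u$ becomes the fresh self-indicator plus the entrywise sum of the neighbors' sketches from the previous round, i.e.
\[
\mathbf{f}^{k}_u \;=\; \mathbf{f}_u + \sum_{w\in N(u)} \mathbf{f}^{k-1}_w .
\]
The base case $k=0$ is immediate: $\mathbf{f}^0_u[v]=\mathbf{f}_u[v]=1$ exactly when $v=u$, which matches the fact that the only walk of length at most $0$ from $u$ to $v$ is the trivial (empty) walk from $u$ to itself.

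For the inductive step I would assume that for every node $w$ the entry $\mathbf{f}^{k-1}_w[v]$ already counts the walks of length at most $k-1$ from $w$ to $v$, and then justify the recurrence combinatorially through a bijection. Fix $u$ and $v$ and split the walks of length at most $k$ from $u$ to $v$ into the trivial length-$0$ walk (which exists iff $u=v$, accounting for the $\mathbf{f}_u[v]$ term) and the walks of length between $1$ and $k$. Every walk of the latter kind has a uniquely determined first edge $(u,w)$ with $w\in N(u)$, and deleting its first node yields a walk of length at most $k-1$ from $w$ to $v$; conversely, prepending $u$ to any such walk produces a distinct walk of length between $1$ and $k$ from $u$ to $v$. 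This correspondence is a bijection, so the number of nontrivial walks equals $\sum_{w\in N(u)}\mathbf{f}^{k-1}_w[v]$ by the induction hypothesis, and adding the trivial term reproduces exactly the recurrence above.

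The argument is essentially routine, and the only point that needs care is the bookkeeping of the length-$0$ walk. Because the sketch re-injects the fixed indicator $\mathbf{f}_u$ at every iteration rather than carrying over the previous iterate $\mathbf{f}^{k-1}_u$, the recurrence in matrix form is $M_k = I + A\,M_{k-1}$ with $M_0=I$, which unrolls to $M_k=\sum_{i=0}^{k}A^i$; this is precisely what makes the count ``at most $k$'' rather than ``exactly $k$'' (which would give $A^k$) or an overcounted ``$(I+A)^k$'' that would arise from replacing $\mathbf{f}_u$ by the previous self-sketch $\mathbf{f}^{k-1}_u$. I would also emphasize that the objects being counted are walks, i.e.\ node sequences in which vertices and edges may repeat, so that ``unique walks'' means distinct sequences and the counts agree with the standard interpretation of $(A^i)[u,v]$. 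Verifying that the first-edge decomposition is genuinely bijective---first edge well defined, prepending injective and surjective onto the nontrivial walks---is the one place where a careless argument could silently over- or undercount, so I would state it explicitly.
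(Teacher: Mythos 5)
Your proof is correct and follows essentially the same route as the paper's: induction on $k$, with the first-edge decomposition that prepends $u$ to walks counted at its neighbors. The difference is one of rigor, and it is worth spelling out because your extra care resolves a genuine ambiguity in the paper. The paper's induction step never pins down which recurrence the aggregation realizes, does not account for the trivial length-$0$ walk or the self term, and simply asserts that prepending $u$ yields all walks of length at most $k$. In fact, the aggregation rule the paper writes explicitly in its $L_1$ section, $\mathbf{w}^k_u = \mathbf{w}^{k-1}_u \oplus \sum_{v \in N(u)} \mathbf{w}^{k-1}_v$, unrolls in matrix form to $(I+A)^k = \sum_{i=0}^{k} \binom{k}{i} A^i$, which counts each walk of length $i$ exactly $\binom{k}{i}$ times and hence does \emph{not} satisfy the lemma for $k \ge 2$. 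Your reading of the update --- re-inject the fixed indicator $\mathbf{f}_u$ every round, so that $M_k = I + A\,M_{k-1}$ with $M_0 = I$, unrolling to $M_k = \sum_{i=0}^{k} A^i$ --- is precisely the one under which the stated lemma is literally true, and your explicit check that the first-edge correspondence is a bijection (first edge well defined, prepending injective and surjective onto the nontrivial walks) is exactly the bookkeeping the paper's one-line induction leaves implicit. In short: same approach, but your version is the complete and internally consistent one.
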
 
\begin{proof}
We show that the entry $\mathbf{f}_u^{k}[v]$ corresponds to the number unique walk of length at most $k$ from $u$ to $v$ by induction on $k$. For $k=0$ the statement is trivial. For $k>1$ at each node $u$ we add up the frequency vectors collected at nodes $w \in N_k(u)$. By the induction assumption for $k-1$ the frequency vectors $\mathbf{f}_w^{k-1}[v]$ record the number of unique walks starting at $w$ and ending at $v$ of length  $t \le k-1$. Using that all neighbors of a node are distinct, by appending $u$ as a new starting node of each of these walks we create a new unique walk from $u$ to $v$ of length $t+1\le k$.   
\end{proof}

Sampling at random a node $v$ from $N_k(u)$ with probability $\mathbf{f}_u^k[v]/\|\mathbf{f}_u^k\|_1$ thus corresponds to a random walk starting at $u$ of length $k$.
Of course, a sketch that stores the entire frequency vectors $\mathbf{f}_u^k$ is not very useful. Even for smaller values of $k$, we are likely to end up with dense vectors at each node as most real-life networks have a small diameter which would lead to a total space of $O(n^2)$. Also, sampling an index from each frequency vector does not yield coordinated samples.

\begin{figure}
{\sc COLOGNE Sampling}
\begin{algorithmic}[1]
\REQUIRE Graph $G = (V, E)$
\FOR{each $u \in V$}
\STATE Initialize $sketch_u$ with node $u$ 
\ENDFOR
\FOR{$i=1$ to $k$}
\FOR{each $u \in V$}
\STATE Update $sketch_u$ by merging the sketches $sketch_v$ for $v \in N(u)$ into $sketch_u$ 
\ENDFOR
\ENDFOR
\FOR{$u \in V$}
\STATE {\bf Return} a sample from $sketch_u$
\ENDFOR
\end{algorithmic}
\caption{The overall structure of the coordinated local neighborhood sampling approach.}  \label{fig:main_alg} 
\end{figure}

\subsection{Lower bound}
If we have random access to each node's neighbors, then a random walk that samples from $N_k(u)$ can be implemented in time $O(k)$. But what about more general distributions? 
The following lemma gives a lower bound on the complexity of any algorithm that samples uniformly at random from a local neighborhood, i.e., for a node $u$ each node in $N_k(u)$ has equal chance of being sampled.
\begin{thmx} \label{thm:lb}
Any algorithm that queries less than $deg(u)/2$ edges in the local neighborhood of each $u \in V$, cannot obtain a $4/3$-approximation of uniform sampling.  
\end{thmx}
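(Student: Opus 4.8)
The plan is to read the statement as a query-complexity lower bound and prove it by an \emph{indistinguishability (adversary) argument}: I will exhibit a graph in which a single ``heavy'' second-hop structure is hidden among the $deg(u)$ neighbors of $u$, and show that any algorithm issuing fewer than $deg(u)/2$ edge queries fails to locate it on most instances, which forces its sampling distribution away from uniform by more than a factor $4/3$. Concretely, fix a node $u$ with neighbors $w_1,\dots,w_d$ where $d=deg(u)$, and take $k=2$ (so hardness holds for every $k\ge 2$). Let the base instance $I_0$ make every $w_i$ a leaf, so $N_2(u)=\{u,w_1,\dots,w_d\}$ has $d+1$ nodes. For each $j\in\{1,\dots,d\}$ let $I_j$ be obtained from $I_0$ by attaching $L=d+1$ fresh private leaves to $w_j$, so that $N_2(u)$ now has $2(d+1)$ nodes. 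The node $u$ lies in $N_2(u)$ in every instance and serves as a fixed marker; I will track only the quantity $q:=\Pr[\text{output}=u]$.

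First I would fix a deterministic algorithm $\mathcal A$ making fewer than $d/2$ edge queries and run it on $I_0$; since detecting that $w_i$ is heavy requires querying an edge incident to $w_i$ other than $(u,w_i)$, $\mathcal A$ inspects a set $S$ of fewer than $d/2$ neighbors. The core is then a coupling: for any $j\notin S$ the query responses on $I_j$ coincide with those on $I_0$, because the only edges that differ are the $(w_j,\cdot)$ edges and $\mathcal A$ never queries them along this execution, so $\mathcal A$ produces the \emph{same} output, hence the same value $q$, on $I_0$ and on $I_j$. Since $|S|<d/2$, more than $d/2$ indices satisfy $j\notin S$, so at least one such $I_j$ exists. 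Finally I would derive the contradiction: a $4/3$-approximation of uniform sampling on $I_0$ forces $q\ge \tfrac{3}{4(d+1)}$, whereas on $I_j$ it forces $q\le \tfrac{4}{3}\cdot\tfrac{1}{2(d+1)}=\tfrac{2}{3(d+1)}$; as $\tfrac34>\tfrac23$ these cannot hold simultaneously, so $\mathcal A$ violates the guarantee on $I_0$ or on $I_j$. (Any $L>\tfrac79(d+1)$ suffices; $L=d+1$ is the cleanest choice.)

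The delicate part will be upgrading this from deterministic to \emph{randomized, adaptive} algorithms and nailing down the exact query model. For randomness I would invoke Yao's principle against the uniform distribution over the hidden index $j$: averaging the bound $|S(r)|<d/2$ over the algorithm's coins $r$ gives $\mathbb{E}_j\Pr_r[j\in S(r)]<1/2$, so some $j$ is missed with probability exceeding $1/2$; this bounds the total variation between $\mathcal A$'s behavior on $I_0$ and $I_j$ below $1/2$ and still pins $q$ close enough to its $I_0$ value to contradict the two constraints, given the factor-$2$ gap in neighborhood sizes. The other point to get exactly right is the accounting that charges one inspected neighbor to at least one edge query, so that the threshold is precisely $deg(u)/2$; and, to match the ``each $u\in V$'' phrasing, the gadget should be attached at every node at once, which does not interfere since the hidden leaves are private to each $w_j$.
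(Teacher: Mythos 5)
Your construction is genuinely different from the paper's (the paper hides a factor-$4/3$ discrepancy in $2$-hop neighborhood sizes between ``marked'' and ``unmarked'' nodes of two interconnected cliques, rather than planting one heavy neighbor among leaves), and an indistinguishability argument over your gadget could in principle prove the theorem, even with a stronger constant. The deterministic half of your argument is arithmetically fine but essentially vacuous: a deterministic algorithm outputs a fixed node, so it already violates the per-node lower bound $\Pr[\mathrm{output}=v]\ge\frac{3}{4(d+1)}$ for every other $v$, with no construction needed. The substance of the theorem is the randomized case, and that is exactly where your proof has a genuine gap.

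The gap is quantitative. Your averaging step only yields an index $j$ with $\Pr_r[j\in S(r)]<1/2$, hence a coupling under which the executions on $I_0$ and $I_j$ agree with probability greater than $1/2$; this gives the \emph{additive} bound $|q_0-q_j|<1/2$, where $q_0,q_j$ are the probabilities of outputting $u$ on the two instances. But the two constraints you want to play against each other are $q_0\ge\frac{3}{4(d+1)}$ and $q_j\le\frac{2}{3(d+1)}$: both are $\Theta(1/d)$ and they differ by only $\frac{1}{12(d+1)}$, so an additive slack of $1/2$ is useless and no contradiction follows. This cannot be repaired by a cleverer choice of $j$: an algorithm that inspects a uniformly random set of roughly $d/2$ neighbors has $\Pr[j\in S(r)]\approx 1/2$ for \emph{every} $j$, so $1/2$ is the true barrier. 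The repair is to change what you track and enlarge the gadget: track the aggregate probability mass the algorithm places on the $L$ fresh leaves of $w_j$. On $I_j$ the guarantee forces this mass to be at least $\frac{3L}{4(d+1+L)}$, while (under adversarial naming of the fresh leaves) the algorithm can only output a leaf it has discovered, so on the coupled runs this mass is at most $\Pr_r[j\in S(r)]<\frac12$. Your choice $L=d+1$ is too small, since it forces mass only $\frac38<\frac12$; taking $L>2(d+1)$ makes $\frac{3L}{4(d+1+L)}>\frac12$ and yields the contradiction. With that replacement (and your accounting that charges each inspected neighbor to an edge query), the argument goes through; as written, the randomized case fails.
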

\begin{proof}
Consider a graph constructed as follows. Let $K_1$ and $K_2$ be two cliques, each on $n$ nodes. Choose at random $n/2$ nodes from both $K_1$ and $K_2$, call these the {\em marked} nodes and denote them as $M_1$ and $M_2$. For both $K_1$ and $K_2$ delete all edges between marked nodes and connect each node from $M_1$ to all nodes in $M_2$, thus creating a bipartite clique between marked nodes. Marked nodes have 2-hop neighborhood consisting of $2n-1$ nodes while unmarked nodes have a 2-hop neighborhood of  $3n/2-1$ nodes. Consider two nodes $u_K, u_M$ such that $u_K \in K_1\backslash M_1, u_M \in M_1$ and observe that by querying less than $n/2$ neighbors for $u_K$ and $u_M$ we can remain in $K_1$. By never reaching the nodes in $K_2$ we cannot distinguish between the neighborhood sizes of marked and unmarked nodes, and thus we cannot obtain an approximation ratio better than $2n/(3n/2)=4/3$. 
\end{proof}

The above result formalizes the intuition that in order to achieve uniform sampling we need to consider the entire neighborhood of each node, or $\Theta(m)$ edges for all nodes. Next we show that despite the lower bound we can still design scalable algorithms by computation sharing between  nodes. In fact, this is the key to coordinated sampling.

\subsection{Uniform ($L_0$) sampling}

We first present a simple algorithm for sampling uniformly at random from the local neighborhood of each node. The approach builds upon {\em min-wise independent permutations}~\citep{minwise}, a powerful technique for estimating the Jaccard similarity between sets. Assume we are given two sets $A \subseteq U$ and $B \subseteq U$, where $U$ is a universe of elements, for example all integers. We want to estimate the fraction $\frac{|A\cap B|}{|A\cup B|}$. Let $\pi: U \rightarrow U$ be a random permutation of the elements in $U$. With probability $\frac{|A\cap B|}{|A\cup B|}$ the smallest element in $A \cup B$ with respect to the total order defined by $\pi$ is contained in $A\cap B$ and thus the indicator variable denoting whether the smallest elements in $\pi(A)$ and $\pi(B)$ are identical yields an unbiased estimator of the Jaccard similarity  $J(A, B)$. The mean of $t = O(\frac{1}{\alpha \varepsilon^2})$ independent estimator is an $1\pm \varepsilon$-approximation of $J(A, B) \ge \alpha$ with probability more than 1/2. The success probability can be boosted to any $1-\delta$ by taking the median of $\log 1/\delta$ independent estimators. 

An algorithm for sampling uniformly from the $k$-hop neighborhood of node $u$ easily follows. We implement a random permutation on the $n$ nodes by generating a random number for each node $r: V \rightarrow \{0,1,\ldots,\ell-1\}$. For a sufficiently large $\ell$ with high probability $r$ is a bijective function and thus it implements a random permutation\footnote{For example, for $\ell = n^2/\delta$ with probability at least $1-\delta$ the function is bijective.}. For each node $u$, $sketch_u$ is  initialized with $(r(u), u)$, i.e., the sketch is just a single (random number, node) pair. The aggregation after each iteration is storing the pair with the smallest random number from $u$'s neighbors, $sketch_u = \min_{(r_v, v): v \in N(u)}sketch_v$. After $k$ iterations at each node $u$ we have stored the smallest number form the set $\{r(v): v \in N_k(u)\}$, i.e., we have sampled a node from $N_k(u)$ according to the permutation defined by the function $r$. Clearly, the samples for any two nodes $u$ and $w$ are coordinated as we work with the same permutation on the set $N_k(u) \cup N_k(w)$. The next theorem is a straightforward corollary from the main result on minwise-independent permutations~\citep{minwise}:

\begin{thmx}
For all nodes $u \in V$, we can sample $s_u \in N_k(u)$ with probability $1/|N_k(u)|$  in time $O(mk)$ and space $O(n)$. For any pair of nodes $u, v$ it holds $$\Pr[s_u=s_v] = \frac{|N_k(u) \cap N_k(v)|}{|N_k(u) \cup N_k(v)|}$$
\end{thmx}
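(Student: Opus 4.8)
The plan is to reduce the statement to the classical theory of min-wise independent permutations~\citep{minwise}, treating the assignment $r$ --- conditioned on being injective on $V$, which holds with high probability for large $\ell$ --- as a uniformly random permutation of the nodes. First I would verify, by an induction on $k$ paralleling Lemma~\ref{lem:freq}, that after $k$ iterations the sketch at each node $u$ stores the pair $(r(w), w)$ where $w$ is the node minimizing $r$ over $N_k(u)$; denote this node $s_u$. The base case $k=0$ is immediate since $sketch_u = (r(u), u)$ and $N_0(u) = \{u\}$, and the inductive step follows because merging the neighbors' sketches (together with $u$'s own value) takes the minimum over $N_{k-1}(u) \cup \bigcup_{v \in N(u)} N_{k-1}(v) = N_k(u)$.

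The two probabilistic claims are then immediate consequences of min-wise independence. For uniform sampling, under a random permutation every element of the fixed set $N_k(u)$ is equally likely to be the smallest, giving $\Pr[s_u = v] = 1/|N_k(u)|$ for each $v \in N_k(u)$. For the collision probability, the key observation is that $s_u = s_v$ holds exactly when the node minimizing $r$ over $N_k(u) \cup N_k(v)$ lies in $N_k(u) \cap N_k(v)$: if the overall smallest node $w^\star$ belongs to both neighborhoods it is simultaneously the minimizer for $u$ and $v$, so $s_u = s_v = w^\star$, whereas if $w^\star$ lies in only one set the two minimizers differ. Since the minimizer over the union is uniform over the union, the probability it lands in the intersection is $|N_k(u) \cap N_k(v)| / |N_k(u) \cup N_k(v)|$, exactly the claimed Jaccard similarity.

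The complexity bounds are routine: each of the $k$ iterations has every node read the single pair stored at each neighbor, costing $\sum_u deg(u) = O(m)$ per iteration and $O(mk)$ in total, while each node keeps exactly one (random number, node) pair throughout, for $O(n)$ space. The step I expect to require the most care is not the collision analysis --- that is essentially the original result of~\citep{minwise} --- but the passage from the idealized random permutation to the hash-based assignment $r$: ties among $r$-values would break the exact equalities, so one must condition on the high-probability event that $r$ is injective on $V$ (for which $\ell = n^2/\delta$ suffices) or appeal to an exactly min-wise independent family.
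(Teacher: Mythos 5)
Your proposal is correct and takes essentially the same route as the paper: the paper also implements a random permutation via the (with high probability injective) assignment $r$, argues that $k$ rounds of min-merging leave each node holding the minimum of $r$ over $N_k(u)$, and then derives both the uniformity and the Jaccard collision probability as a direct corollary of the min-wise independent permutations result of~\citep{minwise}. The only difference is one of exposition --- you spell out the induction on $k$ and the union/intersection collision argument that the paper leaves implicit, and you flag the injectivity conditioning that the paper relegates to a footnote.
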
 \label{thm:l0}
Note that for constant $k$ we match the lower bound from Theorem~\ref{thm:lb} and the space usage is $O(n)$ and not $O(m)$ because we need $k$ passes over the edges but they do not need to be stored in memory, but instead can be read from a secondary source or generated on the fly in arbitrary order.

In terms of linear algebra, the algorithm is an efficient implementation of the following approach: Let $A$ be the the adjacency matrix of $G$. Permute the columns of $M_k = \sum_{i=0}^kA^i$, and for each row in $M_k$ select the first nonzero coordinate. But we avoid the explicit generation of the powers $A^i$ which, as discussed in Section~\ref{sec:cont}, could be dense matrices even for small values of $i$. Thus, the algorithm implements coordinated $L_0$ sampling from each row of $M_k$.

\subsection{$L_p$ sampling} \label{sec:l0}

The solution for uniform sampling is simple and elegant but it does not fully consider the graph structure. We are only interested if there is a path between two nodes $u$ and $v$ but, unlike in random walks, not how many paths are there between $u$ and $v$. We might need a sampling probability  that is proportional to the probability that a node is accessed by a random walk but in the same time guarantees that sampling is coordinated, i.e., for a node $x \in N_k(u) \cap N_k(v)$ we want $$\Pr[s_u = x  \texttt{ and } s_v = x ] \gg \Pr[s_u = x] \Pr[s_v = x] $$

Let us first present an approach to $L_p$ sampling from data streams for $p \in (0, 2]$~\citep{lp_sampling,lp_sampling_survey}. Let $\mathcal{S}$ be a data stream of pairs $i, w_i$ where $i$ is the item and $w_i \in \mathbb{R}$ the weight update for item $i$, for $i \in [n]$. The objective is to return each item $i$ with probability roughly $|\mathbf{f}[i]|^p/\|\mathbf{f}\|^p_p$, where $\mathbf{f}[i] = \sum_{(i, w_i) \in \mathcal{S}}w_i$. (Note that items are supposed to occur multiple times in the stream.) The problem is trivial if we can afford to store the entire frequency vector $\mathbf{f}$ but for larger $n$ this can be prohibitively expensive. The idea is to reweight each item by scaling it by a random number $1/r_i^{1/p}$ for a uniformly sampled $r_i \in (0, 1]$. Let $z_i = \mathbf{f}[i]/r_i^{1/p}$ be the reweighted weight of item $i$. The crucial observation is that $$\Pr[z_i \ge \|\mathbf{f}\|_p] = \Pr[r_i \le {\mathbf{f}[i]^p}/{\|\mathbf{f}\|_p^p}] = \frac{\mathbf{f}[i]^p}{\|\mathbf{f}\|_p^p}$$ Thus, we need to detect a reweighted item whose weight exceeds $\|\mathbf{f}\|_p$. The solution in~\citep{lp_sampling,lp_sampling_survey} is to show that with constant  probability there exists a unique item whose weight exceeds $\|\mathbf{f}\|_p$ and the total weight of all other items is bounded. Thus, if we know the value of $\|\mathbf{f}\|_p$, a space-efficient solution is to keep a sketch data structure form which we can detect the heavy hitter that will be the sampled item. Estimating the norm of a frequency vector is a fundamental problem with known solutions~\citep{ams,count_sketch}, therefore the approach yields a space-efficient solution to $L_p$ sampling from data streams.

We will follow the above approach but there are several challenges we need to address when applying it to local neighborhood graph sampling.
The main difference is that we cannot afford to explicitly generate all entries in the frequency vector of the local neighborhood as this would result in time complexity of $O(\sum_{i=0}^kA^i)$, $A$ being the graph adjacency matrix. Also, estimating the norm of the frequency vectors has to be done in a new way because we cannot explicitly generate all updates to $\mathbf{f}_u^k$, the $k$-hop frequency vectors of node $u$. 

Our solution is based on the idea of {\em mergeable sketches}~\citep{mergeable}. Following the COLOGNE sampling template from Figure~\ref{fig:main_alg}, we  iteratively generate sketches at each node. The sketch collected in the $i$-th iteration will summarize the $i$-hop neighborhood of each node. In~\citep{lp_sampling} the authors use the Count-Sketch data structure~\citep{count_sketch}. This would fit perfectly into the COLOGNE framework as CountSketch is a linear projections of the data. Namely, an input vector $x \in \mathbb{R}^n$ is projected onto an $m$-dimensional subspace, $m \ll n$, as $Px$ for $P \in \mathbb{R}^{m\times n}$ where the projection matrix $P$ is implicitly defined using advanced hash functions. This means that for two vectors $x$ and $y$ we have $sketch(x + y) = sketch(x) + sketch(y)$ and we can iteratively sketch the $k$-hop frequency vector from the local neighborhood of each node. The result will be identical to first computing the frequency vector and then sketching it. Unfortunately, we cannot use Count-Sketch because we cannot efficiently retrieve the heavy hitter from the sketch. In the setting in~\citep{lp_sampling} a single vector is being updated in a streaming fashion. Thus after preprocessing the stream we can afford to query the sketch for each vector index $i \in [n]$ as this wouldn't increase the asymptotic complexity of the algorithm. But in our case this would mean we need to know all nodes in the local neighborhood of each node. And even if we knew those nodes, we would need $O(\sum_{i=0}^k nnz(A^i))$ queries in total which is likely to be of order $O(n^2)$.

Fortunately, the solution lies in applying another kind of summarization algorithms for frequent items mining, the so called {\em counter based algorithms}~\citep{frequent,spacesaving}. In this family of algorithms the sketch consists of an explicitly maintained list of frequent items candidates. For a sufficiently large yet compact sketch the heavy hitter is guaranteed to be detected. Next we obtain theoretical results for the approach.

\subsection*{$L_1$ sampling} \label{sec:l1}
In a nutshell, the algorithm works as follows. We adapt the approach from Lemma~\ref{lem:freq} but replace the frequency vector $\mathbf{f}^{k}_u$ at node $u$ with a {\em randomly weighted} vector  $\mathbf{w}^{k}_u$  such that $\mathbf{w}^k_u[v] = \mathbf{f}^k_u[v]/r_v$.  For each node $u$ we generate a random number $r_u \in (0, 1]$, initialize $\mathbf{w}^k_u[u] = 1/r_u$ and set $\mathbf{w}^k_u[v] = 0$ for all $v \neq u$.  We iterate over the neighbor nodes and update the vector $\mathbf{w}^k_u = \mathbf{w}^{k-1}_u \oplus \sum_{v \in N(u)} \mathbf{w}^{k-1}_v$ where $\oplus$ denotes entrywise vector addition.  We sample a node $v$ iff $\mathbf{w}_u[v] \ge t$ for some~$t$. Instead of explicitly maintaining the reweighted frequency $\mathbf{w}^k_u$, we keep a sketch from which we detect a heavy hitter that is then returned as a sample $s_u$. 

Let us provide some intuition before formally showing that the above approach yields coordinated sampling and preserves the connectivity properties of each node's neighborhood. First note that each node $v \in N_k(u)$ has a chance for being selected as $u$'s sample, i.e., $s_u=v$, because the numbers $r_v$ are chosen at random. Second, if there are many paths from $u$ to $v$ then $\Pr[s_u=v]$ is larger as $\mathbf{f}_u[v]$ is larger, in the same way as in random walks. And third, sampling is coordinated as for each $v \in  N_k(u) \cap  N_k(w)$, $v$ has a better chance to be the sample for $u$ and $w$, $s_u=s_w=v$, if $r_v$ is small which is in contrast to independent random walks.

What remains to consider is the computational complexity. As discussed, explicitly storing and updating the weighted vectors $\mathbf{w}^k_u$ for each node $u$ is not feasible because even for small $k$ this might lead to memory usage of $O(n^2)$. Instead, we will efficiently detect a node $x$ from a sketch of $u$'s $k$-hop weighted neighborhood frequency vector $\mathbf{w}_u^k$ for which it holds $\mathbf{w}^k_u[x] \ge t$. In this case the sketch will be realized using a frequent items mining algorithm such as~\citep{frequent} which detects heavy hitters in streams of weighted items without explicitly storing all items. 

We can phrase the algorithm again in terms of the adjacency matrix $A$ of $G$. Let $W=\sum_{i=0}^kA^i\cdot R$ where $R \in \mathbb{R}^{n\times n}$ is a diagonal matrix with diagonal entries randomly selected  from $(0, 1]$. Then from the $i$-th row we return as sample the index $j$ for which it holds $\argmax_{j} W_{i, j}$, i.e., the index with the maximum value in the row. Sampling in this way is coordinated because the $j$-th column of $\sum_{i=0}^kA^i$ is multiplied by the same random value $r_j$. 

\begin{thmx} Let $G$ be a graph over $n$ nodes and $m$ edges, and let $\mathbf{f}^k_u$ be the frequency vector of the $k$-hop neighborhood of node $u \in V$.
For all $u \in V$, we can sample a node $s_u \in N_k(u)$ with probability $\frac{\mathbf{f}^k_u[s_u]}{\|\mathbf{f}^k_u\|_1}$  in time $O(mk\log n)$  and space $O(n \log n)$. For each pair of nodes $u, v \in V$ $$\Pr[s_u=s_v] \sim \sum_{x \in V}\min(\frac{\mathbf{f}^k_u[x]}{\|\mathbf{f}^k_u\|_1}, \frac{\mathbf{f}^k_v[x]}{\|\mathbf{f}^k_v\|_1})$$
\end{thmx}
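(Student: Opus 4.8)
The plan is to instantiate the generic $L_1$-sampling recipe from Section~\ref{sec:l0} inside the COLOGNE aggregation template and to exploit shared randomness across neighborhoods to obtain both the per-node sampling probability and the collision probability in one stroke. Fix one uniform variable $r_x \in (0,1]$ per node $x$, \emph{shared across all neighborhoods}, and set $\mathbf{w}^k_u[x] = \mathbf{f}^k_u[x]/r_x$. By Lemma~\ref{lem:freq} the mergeable entrywise-addition aggregation computes $\mathbf{f}^k_u$ correctly, so the same iteration computes $\mathbf{w}^k_u$ up to the fixed scaling $1/r_x$. Crucially, the per-node norm is additive, $\|\mathbf{f}^k_u\|_1 = \|\mathbf{f}^{k-1}_u\|_1 + \sum_{v \in N(u)}\|\mathbf{f}^{k-1}_v\|_1$, so it can be tracked \emph{exactly} with a single extra counter per node, sidestepping the norm-estimation step that general $L_p$ stream samplers require.

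First I would fix the per-node probability by declaring $x$ the sample exactly when its reweighted mass crosses the norm threshold, $\mathbf{w}^k_u[x] \ge \|\mathbf{f}^k_u\|_1$. This gives the clean identity
\[
\Pr\!\left[\mathbf{w}^k_u[x] \ge \|\mathbf{f}^k_u\|_1\right] = \Pr\!\left[r_x \le \tfrac{\mathbf{f}^k_u[x]}{\|\mathbf{f}^k_u\|_1}\right] = \frac{\mathbf{f}^k_u[x]}{\|\mathbf{f}^k_u\|_1},
\]
which is exactly the target. Because the thresholds differ between nodes but $r_x$ is shared, the events $s_u = x$ and $s_v = x$ reduce to $r_x \le \mathbf{f}^k_u[x]/\|\mathbf{f}^k_u\|_1$ and $r_x \le \mathbf{f}^k_v[x]/\|\mathbf{f}^k_v\|_1$; their conjunction is $r_x \le \min(\cdot,\cdot)$, with probability equal to that minimum, and since the sample-value events $\{s_u = s_v = x\}$ are disjoint over $x$, summing yields precisely the claimed weighted-Jaccard collision $\sum_x \min(\mathbf{f}^k_u[x]/\|\mathbf{f}^k_u\|_1,\ \mathbf{f}^k_v[x]/\|\mathbf{f}^k_v\|_1)$. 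This is the payoff of coordination: a single shared threshold variable converts per-coordinate crossing probabilities into a per-coordinate $\min$.

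The main obstacle is the space-bounded recovery of the crossing coordinate, which is where the ``$\sim$'' enters and where I would lean on the $L_p$-sampling analysis of~\citep{lp_sampling,lp_sampling_survey}. Two facts are needed. First, \emph{uniqueness and constant success}: the expected number of crossing coordinates is $\sum_x \mathbf{f}^k_u[x]/\|\mathbf{f}^k_u\|_1 = 1$, so with constant probability exactly one crosses; the $O(\log n)$ factor in the bounds is then the number of independent repetitions (fresh seeds) that boost this constant success to high probability, and the residual failure mass is exactly what ``$\sim$'' absorbs. Second, \emph{heaviness of the crosser}: I would bound the reweighted mass lying below the threshold, whose expectation is $\sum_x \mathbf{f}^k_u[x]\ln(\|\mathbf{f}^k_u\|_1/\mathbf{f}^k_u[x]) = O(\|\mathbf{f}^k_u\|_1 \log n)$ using $\mathbf{f}^k_u[x]\ge 1$ and $\|\mathbf{f}^k_u\|_1 = \mathrm{poly}(n)$; by Markov this light mass is $O(\|\mathbf{f}^k_u\|_1 \log n)$ with constant probability. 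Hence a crossing coordinate carries at least a $1/O(\log n)$ fraction of the total reweighted mass, so a counter-based frequent-items sketch~\citep{frequent} with $O(\log n)$ counters is guaranteed to retain it as a heavy hitter. This is the one delicate step; everything else is bookkeeping.

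Finally, the complexity follows by accounting. The aggregation runs $k$ iterations, each merging the $O(\log n)$-counter sketch of every node into its neighbors' sketches, for $O(mk\log n)$ total time, while storing one such sketch per node plus the exact norm counter costs $O(n\log n)$ space. As in the $L_0$ case the edges are streamed and need not be resident, matching the stated bounds.
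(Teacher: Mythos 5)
Your proposal follows essentially the same route as the paper's proof: the same reweighting $\mathbf{w}^k_u[x]=\mathbf{f}^k_u[x]/r_x$ with randomness shared across neighborhoods, the same threshold-crossing condition at the exactly-computable norm $\|\mathbf{f}^k_u\|_1$, the same disjointness-plus-$\min$ argument for the collision probability, and the same bounded-reweighted-mass-plus-\textsc{Frequent} argument for $O(\log n)$-space recovery, with matching complexity accounting. The only substantive differences are bookkeeping: the paper's event $\mathcal{E}^u_x$ additionally requires every other node's reweighted weight to stay below $\|\mathbf{f}^k_u\|_1/2$, which is what lets the \textsc{Frequent} summary \emph{identify} (not merely retain) the crossing node, and the $\log n$ factor in the paper's bounds comes solely from the summary size, not from independent repetitions as your aside suggests.
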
 \label{thm:l1}

\begin{proof}
We will show that with constant probability there exists exactly one node $x \in N_k(u)$ that satisfies the sampling condition for node $u$, i.e., $\mathbf{w}^k_u[x] \ge t$ for $t =  \|\mathbf{f}^k_{u}\|_1$ where $\mathbf{w}_u^k$ is the reweighted frequency vector $\mathbf{f}_u^k$. Denote this event by $\mathcal{E}_1$.
Consider a fixed node $x \in N_k(u)$.
First, node $x$ is reweighted by $r_x \sim U(0,1)$, therefore the probability that node $x$ satisfies the sampling condition is $$\Pr[\mathbf{f}^k_{u}[x]/r_x \ge t] = \frac{\mathbf{f}^k_{u}[x]}{\|\mathbf{f}^k_{u}\|_1}$$ 
For fixed $x \in N_k(u)$, let $\mathcal{E}^u_x$ be the event that $x$ is the only node that satisfies the sampling condition and for all other nodes $v \neq x$ it holds $\mathbf{w}^k_u[x] \le t/2$. We lower bound the probability for $\mathcal{E}^u_x$ as follows:
\begin{align}
\Pr[\mathcal{E}^u_x] =  \frac{\mathbf{f}^k_{u}[x]}{\|\mathbf{f}^k_{u}\|_1} \prod_{v \in N_k(u), v \neq x}(1-  \frac{2\mathbf{f}^k_{u}[v]}{t}) \ge \nonumber \\
\frac{\mathbf{f}^k_{u}[x]}{\|\mathbf{f}^k_{u}\|_1} \prod_{v \in N_k(u), v \neq x}\alpha(1-2/t)^{\mathbf{f}^k_{u}[v]} \ge  \texttt{(for $\alpha \in (0, 1]$)} \nonumber \\
\frac{\mathbf{f}^k_{u}[x]}{\|\mathbf{f}^k_{u}\|_1} \alpha (1-  2/t)^{\|\mathbf{f}^k_{u}\|_1} =  
\frac{\mathbf{f}^k_{u}[x]}{\|\mathbf{f}^k_{u}\|_1} e^{ -2\|\mathbf{f}^k_{u}\|_1/t} = \nonumber \\
e^{-2}\frac{\mathbf{f}^k_{u}[x]}{\|\mathbf{f}^k_{u}\|_1} \nonumber
\end{align}
%
The first inequality follows by observing that $(1-k/n)^n \rightarrow e^{-k}$ and $(1-1/n)^{kn}\rightarrow e^{-k}$ for large $n$, and since both $1-k/n$ and $1-1/n$ are in (0,1), for any fixed $n$ and $k < n$ there must exist a constant $\alpha \in (0,1)$ such that $1-k/n \ge \alpha(1-1/n)^k$.
The second inequality holds because $a^{\sum_{v \in V: v \neq x} w_v} > a^{\sum_{v \in V} w_v}$ for $a \in (0, 1)$ and $w_v > 0$.
\\
For $\mathcal{E}_1$ we observe that the events $\mathcal{E}^u_x$, $x \in N_k(u)$ are pairwise disjoint. $$\Pr[\mathcal{E}_1] = \sum_{x \in N_k(u)} \Pr[\mathcal{E}^u_x] \ge \sum_{x \in  N_k(u)} \frac{\mathbf{f}^k_{u}[x]}{e^2\|\mathbf{f}^k_{u}\|_1} = \Omega(1)$$
\\
Next we show  how to efficiently detect the unique element $x$ for which it holds $\mathbf{f}^k_{u}[x]/r_x \ge \|\mathbf{f}^k_{u}\|_1$. 
With probability at least $1-1/n$ it holds $r_x \in [1/n^2, 1]$ for all $x \in V$. We obtain for the expected value of $\mathbf{w}^k_u[x]=\mathbf{f}^k_{u}[x]/r_x$: 
\begin{align}
\mathbb{E}[\mathbf{f}^k_{u}[x]/r_x] = 
O(\mathbf{f}^k_{u}[x] \int_{1}^{n^2} \frac{1}{t} dt) = 
 O(\mathbf{f}^k_{u}[x]\log n) \nonumber
\end{align}
%
%
By linearity of expectation $\mathbb{E}[\|\mathbf{w}_u^k\|_1]={O}(\|\mathbf{f}^k_{u}\|\log n)$. Since the random numbers $r_x$ are independent by Hoeffdings's inequality we obtain that $\|\mathbf{w}_u^k\|_1 = {O}(\|\mathbf{f}^k_{u}\|\log n)$ almost surely. Note that $\|\mathbf{w}_u^k\|_1$ can be computed exactly at each node. Thus, we have shown that there exists a unique node $x$ with weight $\|\mathbf{f}^k_{u}\|_1$ and the total weight of the nodes in $N_k(u)$ is bounded by ${O}(\|\mathbf{f}^k_{u}\|_1\log n)$. Using a deterministic frequent items mining algorithm like {\sc Frequent}~\citep{frequent} we can detect this unique heavy hitter using space $O(\log n)$. Since for all other nodes $v \neq x$ it holds $\mathbf{w}^k_u[x] \le \|\mathbf{f}^k_{u}\|_1/2$, by the main result from~\citep{frequent} it follows that for a summary size of $> 2 \log n$ the heavy hitter will be the only node whose weight in the summary will be at least $\|\mathbf{f}^k_{u}\|_1/2$.  Note that the summaries generated by {\sc Frequent} are mergeable~\citep{mergeable}, and can be merged in time proportional to the number of elements stored in the summary. In each iteration we need to perform exactly $m$ such summary merges, thus each iteration over all nodes takes $O(m\log n)$ and needs space $O(n \log n)$  

To complete the proof consider the probability that $x \in  N_k(u) \cap N_k(v)$ is sampled for both nodes $u$ and $v$, i.e., $s_u=s_v=x$. As shown above, the probability $v \in N_k(u)$ are not sampled for all $v\neq x$ can be lower bounded by $\Omega(1)$.  Observe that $x=s_u$ if $r_x \le \mathbf{f}^k_{u}[x]/\|\mathbf{f}^k_{u}\|_1$, thus we have  
\begin{align}
\Pr[x = s_u \wedge x=s_v] = \Pr[\mathcal{E}_x^u \wedge \mathcal{E}_x^v]  = \nonumber \\
 \Theta(1) \Pr[r_x \le \mathbf{f}^k_{u}[x]/\|\mathbf{f}^k_{u}\|_1 \wedge r_x \le 
\mathbf{f}^k_{v}[x]/\|\mathbf{f}^k_{v}\|_1] \sim \nonumber \\
\min(\frac{\mathbf{f}^k_u[x]}{\|\mathbf{f}^k_u\|_1}, \frac{\mathbf{f}^k_v[x]}{\|\mathbf{f}^k_v\|_1}) \nonumber
\end{align}
The $\mathcal{E}^u_x$ events are disjoint for $x \in N_k(u)$. Also, nodes $x \notin  N_k(u) \cap N_k(v)$ contribute 0 to the similarity as they are not reachable by at least one of $u$ or $v$. Thus, summing over $x \in  N_k(u) \cap N_k(v)$ completes the proof. 
\end{proof}

\subsection*{$L_2$ sampling} \label{sec:l2}

The only obstacle that prevents us from applying the $L_1$ sampling algorithm to the $L_2$ case is that we cannot compute exactly the 2-norm of the weighted vector $\|\mathbf{w}_u^k\|_2$. We can compute $\|\mathbf{w}_u^k\|_1$ without knowing the values $\mathbf{f}_u^k[v]$ in advance, $v \in N_k(u)$, but we need to know the values $\mathbf{f}_u^k[v]^2$ in order to compute the 2-norm of the weighted frequency vector. However, we can efficiently {\em approximate} the 2-norm of a vector revealed in a streaming fashion, this is a fundamental algorithmic problem for which algorithms with optimal computational complexity have been designed~\citep{ams,count_sketch}. These algorithms can be adapted to the local graph neighborhood setting which yields the following result.  
 
\begin{thmx} Let $G$ be a graph over $n$ nodes and $m$ edges, and let $\mathbf{f}^k_u$ be the frequency vector of the $k$-hop neighborhood of node $u \in V$.
For all $u \in V$, we can sample a node $s_u \in N_k(u)$ with probability $(1\pm \varepsilon)\frac{\mathbf{f}^k_u[s_u]^2}{\|\mathbf{f}^k_u\|_2^2}$  in time $O(mk(1/\varepsilon^2 + \log n))$  and space $O(n (1/\varepsilon^2 + \log n))$, for a user-defined $\varepsilon \in (0, 1)$. For each pair of nodes $u, v \in V$ $$\Pr[s_u=s_v] \sim (1\pm \varepsilon)\sum_{x \in V}\min(\frac{\mathbf{f}^k_u[x]^2}{\|\mathbf{f}^k_u\|_2^2}, \frac{\mathbf{f}^k_v[x]^2}{\|\mathbf{f}^k_v\|_2^2})$$
\end{thmx}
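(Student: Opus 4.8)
The plan is to reuse the $L_1$ argument almost verbatim, changing only the reweighting exponent and the source of the threshold. At each node I would now keep the randomly weighted vector $\mathbf{w}^k_u[v] = \mathbf{f}^k_u[v]/r_v^{1/2}$ with $r_v \sim U(0,1]$, and declare a node $x$ the sample $s_u$ when $\mathbf{w}^k_u[x] \ge t$ for the threshold $t = \|\mathbf{f}^k_u\|_2$. The marginal is then immediate from the elementary identity $\Pr[\mathbf{f}^k_u[x]/r_x^{1/2} \ge \|\mathbf{f}^k_u\|_2] = \Pr[r_x \le \mathbf{f}^k_u[x]^2/\|\mathbf{f}^k_u\|_2^2] = \mathbf{f}^k_u[x]^2/\|\mathbf{f}^k_u\|_2^2$, which is exactly the target $L_2$ sampling probability.

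Next I would reprove the uniqueness event $\mathcal{E}_1$ with the same definitions as before: $\mathcal{E}^u_x$ is the event that only $x$ exceeds $t$ while every other $v$ satisfies $\mathbf{w}^k_u[v] \le t/2$. The per-coordinate suppression probability now reads $\Pr[\mathbf{f}^k_u[v]/r_v^{1/2} \le t/2] = 1 - 4\mathbf{f}^k_u[v]^2/t^2$, so that $\Pr[\mathcal{E}^u_x] \ge (\mathbf{f}^k_u[x]^2/\|\mathbf{f}^k_u\|_2^2)\prod_{v \ne x}(1 - 4\mathbf{f}^k_u[v]^2/\|\mathbf{f}^k_u\|_2^2)$. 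Since $\sum_v \mathbf{f}^k_u[v]^2/\|\mathbf{f}^k_u\|_2^2 = 1$, the same elementary bound $1-kx \ge \alpha(1-x)^k$ used in the $L_1$ proof gives $\prod_{v\ne x}(1-4/t^2)^{\mathbf{f}^k_u[v]^2} = (1-4/t^2)^{t^2} \to e^{-4}$, a constant, so $\Pr[\mathcal{E}_1] = \sum_x \Pr[\mathcal{E}^u_x] = \Omega(1)$ by disjointness. Coordination is then literally the $L_1$ computation with squared ratios: for $x \in N_k(u)\cap N_k(v)$ the shared $r_x$ forces $\Pr[s_u=s_v=x] = \Theta(1)\,\Pr[r_x \le \min(\mathbf{f}^k_u[x]^2/\|\mathbf{f}^k_u\|_2^2, \mathbf{f}^k_v[x]^2/\|\mathbf{f}^k_v\|_2^2)]$, and summing these disjoint events over the intersection yields the claimed $\sum_x \min(\cdot,\cdot)$ expression.

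The genuinely new ingredient is producing $t$. Since $\|\mathbf{f}^k_u\|_2$ cannot be evaluated exactly during the iteration, I would maintain at each node, alongside the mergeable \textsc{Frequent} summary, a \emph{linear} second-moment estimator~\citep{ams,count_sketch}. Linearity is exactly what the COLOGNE recursion needs: the estimator of $\mathbf{f}^k_u$ equals the estimator of $\mathbf{f}^{k-1}_u$ plus $\sum_{v\in N(u)}$ of the neighbors' estimators, so it merges by addition along the same recursion as the frequency vectors~\citep{mergeable} and returns $\tilde t = (1\pm\varepsilon)\|\mathbf{f}^k_u\|_2$ with constant probability in $O(1/\varepsilon^2)$ space per node; constant probability suffices because the sampler itself only succeeds with probability $\Omega(1)$. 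Substituting $\tilde t$ for $t$ replaces every threshold $\|\mathbf{f}^k_u\|_2^2$ above by $(1\pm\varepsilon)\|\mathbf{f}^k_u\|_2^2$, which is precisely the origin of the $(1\pm\varepsilon)$ factor in both conclusions, while the uniqueness and coordination computations are unchanged up to this distortion. Heavy-hitter detection then goes through \textsc{Frequent} as in the $L_1$ case, costing $O(\log n)$ space per node, and each of the $k$ iterations merges the two sketches over all $m$ edges, giving time $O(mk(1/\varepsilon^2+\log n))$ and total space $O(n(1/\varepsilon^2+\log n))$.

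The hard part, I expect, is making the approximate threshold cooperate with the \textsc{Frequent} guarantee rather than the probability bookkeeping. I must check that the gap between the single heavy coordinate ($\ge \tilde t$) and the rest ($\le \tilde t/2$) survives the $(1\pm\varepsilon)$ perturbation of $t$, and — more delicately — that the total weighted mass $\|\mathbf{w}^k_u\|_1$ handled by \textsc{Frequent} is small enough relative to $\tilde t$ for an $O(\log n)$-counter summary to isolate the heavy hitter. This is the subtle step, since \textsc{Frequent} offers only an $L_1$-style heavy-hitter bound while the target is an $L_2$ heavy coordinate; I would control $\|\mathbf{w}^k_u\|_1$ using $\mathbb{E}[r_v^{-1/2}] = O(1)$ under the $1/n^2$ truncation and then argue, exactly as in the $L_1$ proof, that the residual after removing the heavy coordinate cannot masquerade as a second heavy hitter. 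Finally I would absorb a union bound, guaranteeing the norm estimate is simultaneously correct across all $n$ nodes, into the $\log n$ term.
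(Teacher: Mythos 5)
Your proposal follows essentially the same route as the paper's proof: rerun the $L_1$ argument with reweighting by $1/r_v^{1/2}$ and threshold $t=\|\mathbf{f}^k_u\|_2$, replace the exact norm computation (impossible here, since squared frequencies do not merge additively) by a mergeable \emph{linear} second-moment sketch (\textsc{CountSketch}/AMS) with $O(1/\varepsilon^2)$ counters per node, and absorb the resulting $(1\pm\varepsilon)$ distortion of the threshold into the stated sampling probabilities, with identical time and space accounting. The one step you single out as delicate --- that \textsc{Frequent} provides only an $L_1$-type guarantee, so isolating a coordinate of weight $\approx\|\mathbf{f}^k_u\|_2$ from a reweighted stream of total mass $\Theta(\|\mathbf{f}^k_u\|_1)$ could in principle require $\|\mathbf{f}^k_u\|_1/\|\mathbf{f}^k_u\|_2 \gg \log n$ counters --- is exactly the point the paper's own proof passes over in silence (it simply asserts that the Theorem~3 retrieval applies), so your attempt is, if anything, more explicit than the published argument about where the remaining technical risk lies.
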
  \label{thm:l2}
\begin{proof}
The same proof as for $L_1$ sampling holds except for computing exactly the norm $\|\mathbf{w}_u^k\|_2$. Instead we will estimate it using {\sc CountSketch}~\citep{count_sketch}. This is a linear data structure that maintains an array with counters. The critical property of linear sketches is that for each node it holds $$sketch(\sum_{v \in N_k(u)} \mathbf{f}_v) = \sum_{v \in N_k(u)} sketch(\mathbf{f}_v)$$ Using a sketch with $O(1/\varepsilon^2)$ counters we obtain a $1\pm \varepsilon$ approximation of $\|\mathbf{w}_u^k\|_2$ for a user-defined $\varepsilon \in (0, 1)$. Thus, we can recover the heavy hitters in the local neighborhood of each node using the approach from the proof of Theorem 3. Observing that for any $\varepsilon \in (0, 1)$ there exists a constant $c$ such that $\frac{1}{1\pm \varepsilon} \in [1-c\cdot \varepsilon, 1+ c\cdot \varepsilon]$, we obtain the stated  bounds on the sampling probability. 
\end{proof}

\subsection{Discussion} 

Let us provide some intuition for the similarity measures approximated by the samples in Theorem~3 and Theorem~4. In $L_0$ sampling we treat all nodes in the local neighborhood equally, while in $L_1$ sampling the sampling probability is proportional to the probability that we reach a local neighbor by a random walk. $L_2$~sampling is biased towards high-degree local neighbors, i.e., if a node is reachable by many paths in the local neighborhood then it is even more likely to be sampled. 

The similarity function approximated by $L_0$ sampling is the Jaccard similarity between node sets in the local neighborhood. But the similarity for $L_1$ and $L_2$ sampling is less intuitive. Consider two vectors $x, y \in \mathbb{R}^n$. It holds $$0 \le \sum_{i=1}^n \min(\frac{x[i]^2}{\|x\|_2^2}, \frac{y[i]^2}{\|y|_2^2}) \le \sum_{i=1}^n \frac{x[i]^2}{2\|x\|_2^2} + \sum_{i=1}^n  \frac{y[i]^2}{2\|y|_2^2} = 1$$

In particular, if two nodes share no nodes in their $k$-hop neighborhoods the similarity is 0 and if they have identical frequency vectors the similarity is 1.

Also, for $a\ge 0, b\ge 0$ it holds $\min(a^2, b^2) \le ab$, thus we have that $$\sum_{i=1}^n \min(\frac{x[i]^2}{\|x\|_2^2}, \frac{y[i]^2}{\|y|_2^2}) \le \sum_{i=1}^n \frac{x[i]y[i]}{\|x\|_2 \|y\|_2} = \cos(x, y)$$

On the other hand, assume that $\|x\|_2 \le \|y\|_2$ and for all $i$ it holds $x[i] \le y[i] \le c x[i]$ for some $c>1$ and $\frac{x[i]}{\|x\|_2} \le \frac{y[i]}{\|y\|_2}$.  Then 
$$\sum_{i=1}^n  \min(\frac{x[i]^2}{\|x\|_2^2}, \frac{y[i]^2}{\|y|_2^2}) \ge \sum_{i=1}^n \frac{x[i]y[i]/c}{\|x\|_2 \|y\|_2} = \cos(x, y)/c$$

Thus, the measure in a sense approximates cosine similarity. And for $L_1$ sampling we obtain that the related measure is the so called sqrt-cosine similarity~\citep{sqrt_cos} $$\text{sqrt-cos}(x, y) = \sum_{i=1}^n \frac{x[i]y[i]}{\sqrt{\|x\|_1} \sqrt{\|y\|_1}} $$

\section{Extensions} \label{sec:ext}
\paragraph{Sampling from graph streams.}
For massive graphs, or graphs where the edges can only be implicitly generated and not persistently stored, the algorithm needs $k$ passes over the edges and the edges can be provided in arbitrary order. The memory usage is bounded by $O(n \log n)$, thus the algorithm works in the semi-streaming model of computation~\citep{semistream} where the space usage must be $O(n \text{ polylog }n)$.  
\paragraph{Exponential weight decay for nodes of larger distance.} If we consider nearby nodes more significant, then we could implement exponential weight decay. Namely, the sampling probability will decrease by $\lambda^{i}$, for some $\lambda < 1$, for nodes $v \in N_k(u)$ of distance $i$ from $s_u$. In linear algebraic terms we want to apply $L_p$ sampling to rows of $\sum_{i=0} \lambda^i A^i$, where are $A$ is the adjacency matrix. Adding exponential decay is straightforward for $L_1$ and $L_2$ sampling: in each next iteration we multiply the weight of the neighborhood nodes by $\lambda$. For $L_0$ sampling one can design heuristics that increase the weight of the nodes over each iteration in order to decrease its odds to be sampled. 
\paragraph{Node and edge weights.} The $L_1$ and $L_2$ sampling algorithms also naturally handle {\em nonnegative} node and edge weights. This is obvious for node weights as we simply reweight the original weight by the random weights. For edge weights, when collecting the neighborhood nodes, we will multiply node weights by the corresponding edge weight. However, the algorithms would fail if we allow negative weights. It is crucial that we summarize the frequency vectors using counter based frequent items mining algorithms and these algorithms do not work in the so called turnstile model~\citep{turnstile} where both positive and negative updates are allowed. 

Considering node or edge weights in the case of uniform sampling case is in a sense contradicting to the very purpose of the approach, namely to treat all local neighbors equally. However, in case one wants to disregard the graph structure but still take into account node weights, then algorithms for the estimation of {\em weighted Jaccard similarity}~\citep{weighted_minwise} can be adapted.  

\section{Previous work} \label{sec:previous}

\subsection{Coordinated sampling from graphs} Coordinated sampling~\citep{coord_sampling} is a widely used algorithmic technique for efficient similarity estimation. It has applications in areas ranging from genome-wide association studies~\citep{gene_lsh} to recommendation systems~\citep{recsys_lsh}.  It has been applied to summarizing massive graphs, examples include problems such as triangle counting in graph streams~\citep{local_triangles}, estimation of local and global clustering coefficients~\citep{lcc}, graph minor counting, etc. We refer to the survey~\citep{graph_stream_survey} for an overview of problems and algorithmic techniques for graph stream mining, many of which are based on ideas for coordinated sampling.  

\subsection{Continuous node embeddings}  \label{sec:cont}

\paragraph{Random walk based embeddings}
Representing words by real valued vectors, the so called word embeddings learnt from natural language corpora, has become ubiquitous and is a natural first step in many problems that require working with natural language. Not surprisingly, pioneering approaches in the area such as word2vec~\citep{word2vec} and GloVe~\citep{glove}  received a lot of attention and have become indispensable tools in natural language understanding. Given the wide range of applications involving graph data, word embedding learning has been extended to node embedding learning where the objective is for a given graph $G=(V, E)$ to learn a function $f: V\rightarrow \mathbb{R}^d$, i.e., a $d$-dimensional vector that represents a discrete object like a graph node by continuous features. Such node embeddings capture the structure of the underlying graph and enable to apply  machine learning algorithms to graph data in a straightforward way.

The first presented approach is based on random walks~\citep{deepwalk}. The main idea behind the algorithm is that for each graph node $u$, we learn how to predict $u$'s occurrence from its context. In natural language the context of each word is the set of surrounding words in a sequence, and for graph nodes the context is the set of local neighbors. Various algorithms have been proposed that allow some flexibility in selecting local neighbors according to different criteria, such as LINE~\citep{line}, PTE~\citep{pte}, node2vec~\citep{node2vec}, APP~\citep{app} and VERSE \citep{verse}.

\paragraph{Matrix factorization}

A branch of node embeddings algorithms work by factorization of (powers of) the adjacency matrix of the graph~\citep{hope,arope,grarep,comm_pres}. These algorithms have well-understood properties but can be inefficient as even if the adjacency matrix is usually sparse, its powers can be dense~\citep{small_world}, e.g., the average distance between any two users in the Facebook graph is only 4~\citep{4degrees}. The computational complexity is improved using advanced techniques from linear algebra. However, these algorithms do not yield interpretable embeddings.

Inspired by works that show that the word2vec model for learning word embeddings from natural text can be phrased as a matrix factorization problem, it has been shown that most of the random walk based approaches can be expressed as matrix factorization for implicitly defined matrices based on powers of the adjacency matrix of the original graph~\citep{matfac}.

\paragraph{Deep learning}
Finally, it is worth noting that node embeddings can be also learned using graph neural networks~\citep{graphsage}. The intermittent layer of a neural network whose input are individual nodes in fact trains embedding vectors per node. The main advantage of GNNs is that they are {\em inductive} and can be applied to previously unseen nodes, while the above discussed approaches are {\em transductive} and work only for a fixed graph. The disadvantage is that these are deep learning architectures that can be slow to train and might require careful hyperparameter tuning. 

\subsection{Coordinated local sampling for interpretable embeddings} \label{sec:nodesketch}
Approaches close to COLOGNE are NetHash~\citep{nethash} and NodeSketch~\citep{nodesketch}. NetHash uses similarity preserving hashing to generate embeddings for {\em attributed} graphs using minwise independent permutations. The algorithm however generates individual rooted trees of depth $k$ for each node and its complexity scales as $O(nt(m/n)^k)$ where $t$ is the maximum number of attributes per node. Our $L_0$ sampling algorithm is much more efficient as there is no need for the generation of separate trees for each node, thus we avoid the factor $(m/n)^k$. Also, COLOGNE does not need to assume, but can still handle, node attributes. 

NodeSketch is a heuristic for coordinated sampling from local neighborhoods. It builds upon algorithms for the estimation of the normalized min-max similarity between vectors $u, v \in \mathbb{R^+}^n$, $\|u\|_1=\|v\|_1=1$: $$\frac{\sum_{i=1}^n \min(u[i], v[i])}{\sum_{i=1}^n \max(u[i], v[i])}$$ The original NodeSketch approach does not consider interpretability but it is essentially a sampling based approach. It works by recursively sketching the neighborhood at each node until recursion depth $k$ is reached. It builds upon an algorithm for min-max similarity estimation~\citep{ioffe}. The approach is conceptually similar to our $L_p$ sampling algorithm but no theoretical guarantee for the quality of the returned samples can be provided. To understand the main difference, the main goal of COLOGNE is to detect a node that satisfies a predefined sampling condition. The sampling condition is the key to showing the stated sampling probability and the similarity function that pairs of sampled nodes approximate. NodeSketch assigns first random weights to nodes using the Ioffe algorithm, similarly to what we do in $L_1$ and $L_2$ sampling.  In the $i$-th recursive call of NodeSketch for each node $u$ we collect samples from $N(u)$ and add up the corresponding weights in case two nodes appear more than once as samples in $N(u)$. For node $u$ we select the node with the smallest weight from $N(u)$ and this is the crucial difference to COLOGNE. Working with a single node is subject to random effects that could make the behavior of the algorithm difficult to explain.  Consider the graph in Figure~\ref{fig:ns}. There are many paths from node $u$ to node $z$ in $N_2(u)$. In the first iteration of NodeSketch it is very possible that most of $u$'s immediate neighbors, the yellow nodes $v_1$ to $v_{12}$, will sample a blue node  as each $v$ is connected with several blue $w$ nodes, i.e., $s_{v_i} = w_j$. Thus, in the second iteration it is likely that $u$ ends up with a sample for $u$ different from $z$.  In contrast, by keeping a sketch for $L_1$ and $L_2$ sampling COLOGNE will provably preserve the information that node $z$ is reachable from $u$ by many different paths. And we can control the importance we assign to sampling easily reachable nodes by weighting the items with $r_i^{1/p}$, $r_i \in (0, 1]$ such in $L_2$ sampling we are more likely to have $s_u = z$.

\begin{figure}

\center{
\definecolor{myblue}{RGB}{80,80,160}
\definecolor{mygreen}{RGB}{80,160,80}
\definecolor{myorange}{RGB}{255,127,0}


\begin{tikzpicture}[thick,
	scale=0.75,
	transform shape,
  fsnode/.style={fill=myblue},
  ssnode/.style={fill=green},
  osnode/.style={fill=myorange},
  every fit/.style={ellipse,draw,inner sep=-2pt,text width=1.6cm},
  -,shorten >= 1pt,shorten <= 1pt
]
\node [fill=red,yshift=0cm,xshift=3cm,label=below:$u$](u)[] {};

\node [fill=yellow,yshift=1cm,xshift=-5.5cm,label=below:$v_1$](v1)[] {};
\node [fill=yellow,yshift=1cm,xshift=-4cm](v2)[] {};
\node [fill=yellow,yshift=1cm,xshift=-2.5cm](v3)[] {};
\node [fill=yellow,yshift=1cm,xshift=-1cm](v4)[] {};
\node [fill=yellow,yshift=1cm,xshift=0.5cm](v5)[] {};
\node [fill=yellow,yshift=1cm,xshift=2cm](v6)[] {};
\node [fill=yellow,yshift=1cm,xshift=3.5cm](v7)[] {};
\node [fill=yellow,yshift=1cm,xshift=5cm](v8)[] {};
\node [fill=yellow,yshift=1cm,xshift=6.5cm](v9)[] {};
\node [fill=yellow,yshift=1cm,xshift=8cm](v10)[] {};
\node [fill=yellow,yshift=1cm,xshift=9.5cm](v11)[] {};
\node [fill=yellow,yshift=1cm,xshift=11cm,label=below:$v_{12}$](v12)[] {};

\node [fill=red,yshift=2cm,xshift=3cm,label=below:$z$](z)[] {};

\draw (u) -- (v1);
\draw (u) -- (v2);
\draw (u) -- (v3);
\draw (u) -- (v4);
\draw (u) -- (v5);
\draw (u) -- (v6);
\draw (u) -- (v7);
\draw (u) -- (v8);
\draw (u) -- (v9);
\draw (u) -- (v10);
\draw (u) -- (v11);
\draw (u) -- (v12);

\draw (z) -- (v1);
\draw (z) -- (v2);
\draw (z) -- (v3);
\draw (z) -- (v4);
\draw (z) -- (v5);
\draw (z) -- (v6);
\draw (z) -- (v7);
\draw (z) -- (v8);
\draw (z) -- (v9);
\draw (z) -- (v10);
\draw (z) -- (v11);
\draw (z) -- (v12);

\node [fill=blue!50!white,yshift=3cm,xshift=-6cm,label=below:$w_1$](w1)[] {};
\node [fill=blue!50!white,yshift=3cm,xshift=-5cm](w2)[] {};
\node [fill=blue!50!white,yshift=3cm,xshift=-4cm](w3)[] {};
\node [fill=blue!50!white,yshift=3cm,xshift=-3cm](w4)[] {};
\node [fill=blue!50!white,yshift=3cm,xshift=-2cm](w5)[] {};
\node [fill=blue!50!white,yshift=3cm,xshift=-1cm](w6)[] {};
\node [fill=blue!50!white,yshift=3cm,xshift=0cm](wx)[] {};
\node [fill=blue!50!white,yshift=3cm,xshift=1cm](w7)[] {};
\node [fill=blue!50!white,yshift=3cm,xshift=2cm](w8)[] {};
\node [fill=blue!50!white,yshift=3cm,xshift=3cm](w9)[] {};
\node [fill=blue!50!white,yshift=3cm,xshift=4cm](w10)[] {};
\node [fill=blue!50!white,yshift=3cm,xshift=5cm](w11)[] {};
\node [fill=blue!50!white,yshift=3cm,xshift=6cm](w12)[] {};
\node [fill=blue!50!white,yshift=3cm,xshift=7cm](w13)[] {};
\node [fill=blue!50!white,yshift=3cm,xshift=8cm](w14)[] {};
\node [fill=blue!50!white,yshift=3cm,xshift=9cm](w15)[] {};
\node [fill=blue!50!white,yshift=3cm,xshift=10cm](w16)[] {};
\node [fill=blue!50!white,yshift=3cm,xshift=11cm](w17)[] {};
\node [fill=blue!50!white,yshift=3cm,xshift=12cm,label=below:$w_{19}$](w18)[] {};

\draw (z) -- (w3);
\draw (z) -- (w4);
\draw (z) -- (w5);
\draw (z) -- (w8);
\draw (z) -- (w10);

\draw (v1) -- (w3);
\draw (v2) -- (w4);
\draw (v5) -- (w5);
\draw (v7) -- (w15);
\draw (v7) -- (w1);
\draw (v5) -- (w2);
\draw (v9) -- (w7);
\draw (v3) -- (w2);
\draw (v2) -- (w3);
\draw (v4) -- (w5);
\draw (v6) -- (w6);
\draw (v5) -- (w2);
\draw (v5) -- (w8);
\draw (v6) -- (w9);
\draw (v6) -- (w1);
\draw (v9) -- (w13);
\draw (v7) -- (w12);
\draw (v7) -- (w7);
\draw (v3) -- (w7);
\draw (v4) -- (w8);
\draw (v11) -- (w10);
\draw (v7) -- (w11);
\draw (v8) -- (w11);
\draw (v12) -- (w11);
\draw (v12) -- (w9);
\draw (v10) -- (w12);
\draw (v11) -- (w9);
\draw (v7) -- (w9);
\draw (v11) -- (w12);
\draw (v3) -- (w1);
\draw (v10) -- (w12);
\draw (v4) -- (w5);
\draw (v5) -- (w4);
\draw (v7) -- (w18);
\draw (v10) -- (w18);
\draw (v8) -- (w17);
\draw (v11) -- (w17);
\draw (v12) -- (w16);
\draw (v7) -- (w16);
\draw (v11) -- (w15);
\draw (v7) -- (w14);
\draw (v10) -- (w14);
\draw (v10) -- (w13);
\draw (v3) -- (w6);
\draw (v3) -- (wx);
\draw (v1) -- (wx);
\draw (v5) -- (wx);

\end{tikzpicture}
}
\caption{NodeSketch~\citep{nodesketch}  might miss that there are many path from $u$ to $z$ of length at most 2. Best viewed in color.}
\label{fig:ns}
\end{figure}
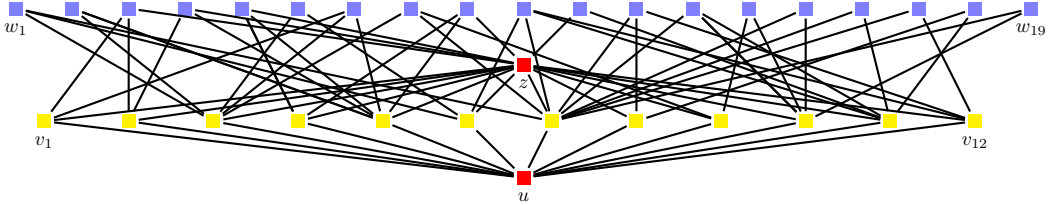

\section{Experiments} \label{sec:exp}

%
%
%
\paragraph{Datasets}
\begin{table}[!h]
\begin{center}
\begin{tabular}{| c || ccccccc |}

\hline
Dataset & nodes &  edges & labels & multilabel & features & weights & diameter\\ 
\hline
\hline
Cora & 2.7K & 5.4K & 7 & no & 1.4K & no & 19\\
\hline
Citeseer & 3.3K & 4.7K & 6 & no & 3.7K & no & 28 \\
\hline
Pubmed & 19.7K & 44.3K & 3 & no & 500 & yes & 18\\
\hline
Wikipedia & 4.8K & 92.5K & 40 & yes & 0 & no & 3\\
\hline
PPI & 3.9K & 38.7K & 50 & yes & 0 & no & 8\\
\hline
BlogCatalog & 10.3K & 333.9K & 40 & yes & 0 & no & 5\\ 
\hline
\end{tabular}
\caption{Information on datasets. The column {\em labels} denotes the total number of node labels, {\em multilabel} shows if a node can be assigned more than one class label, {\em features} is the total number of features that can be assigned to nodes, {\em weights} shows if node features are weighted. The last parameter is the graph diameter, i.e., the maximum distance between any two nodes.} \label{tab:datainfo}
\end{center}
\end{table}
We evaluated COLOGNE sampling against known approaches to local neighborhood sampling on six publicly available graph datasets, summarized in Table~\ref{tab:datainfo}. The first three datasets Cora, Citeseer, Pubmed~\citep{sen_et_al} are citation networks where nodes correspond to papers and edges to citations. Each node is assigned a unique class label, and nodes are described by a list of attributes, for example key words that appear in the article. In Pubmed the attributes for each node are key words with tf-idf scores, in Cora and Citeseer they are unweighted.

The next three datasets, Wikipedia~\citep{wikipedia}, PPI~\citep{ppi} and BlogCatalog~\citep{blogcatalog}, are so called multiclass multilabel problems, i.e., each node is assigned several class labels. The Wikipedia graph represents word co-occurrences and the labels are part-of-speech (POS) tags. PPI is a protein-protein interaction network where labels represent biological states. BlogCatalog is the graph of bloggers, edges represent relations between bloggers and labels the interest tags used by the bloggers.

\paragraph{Experimental setting}
The algorithms were implemented in Python 3 and experiments were performed on a Linux machine with a 3.9 GHz Intel CPU and 16 GB main memory\footnote{The implementation can be found in \url{https://github.com/kingkonk81/cologne}}. 
We generated samples from the $k$-hop neighborhood of each node for $k \in \{1,2,3,4\}$. We used following methods: 
\begin{enumerate}
\item Random walks (RW). For a node $u \in V$, we select at random a neighbor $v \in N(u)$, then sample one of $v$'s neighbors $w \in N(v)$, and so on. After $k$ iterations we return the last selected node. There are many random walk variations with different objectives but since no of them consider coordinated sampling, we only compare with the standard random walk approach. 
\item NodeSketch (NS). We select a node using the approach from~\citep{nodesketch}, see the original paper for details. 
\item Uniform sampling ($L_0$). The minwise sampling approach described in Section~\ref{sec:l0}.
\item $L_1$ sampling ($L_1$). We somewhat simplified the algorithm in Section~\ref{sec:l1} and return as a sample the heaviest element in the reweighted neighborhood frequency vector, as detected by the frequent items mining algorithm.  We used a sketch with 10 nodes. Note that in order to obtain the bounds in Theorem 3 we allowed a node to be sampled if certain constraints are satisfied but these techniques are only of theoretical interest such that we can mathematically analyze the algorithm.
\item $L_2$ sampling ($L_2$). We return as a sample the heaviest node in the reweighted frequency vector as described in Section~\ref{sec:l2}.  We used again a sketch with 10 nodes.
\end{enumerate}

%
\begin{table}[!h]
\begin{center}
\begin{tabular}{| c || c | ccccc |}
\hline
Dataset & $d$ & RW & NS & $L_0$ & $L_1$ & $L_2$\\ 
\hline
\hline
\multirow{3}{*}{Cora} & 10 & 0.8 & 3.1 & 2.8 & 7.2 & 7.4 \\

& 25 & 2.2 & 7.9 & 7.2 & 21.6 & 21.9\\

& 50 & 4.2 & 18.2  & 16.3 & 44.5 & 44.9\\
 \hline
\multirow{3}{*}{Citeseer} & 10 & 1.1 & 6.3 & 5.7 & 11.5 & 11.8 \\
 & 25 & 2.4 & 15.7 & 17.0 & 30.4 & 31.1\\
 & 50 & 6.9 & 36.6 & 33.9 & 61.3 & 62.3 \\
 \hline
 \multirow{3}{*}{Pubmed} & 10 & 6.8 & 67.1 & 61.5 & 104.4 & 104.9 \\
 & 25 & 19.1 & 161.2 & 157.1 & 295.6 & 296.5\\
 & 50 & 38.3 & 322.4 & 304.2 & 528.6 & 535.9\\
 \hline
 \multirow{3}{*}{PPI} & 10 & 1.6 & 1.9 & 1.2 & 14.6 & 16.1 \\
 & 25 & 3.9 & 4.5 & 3.1 & 41.6 & 42.8 \\
 & 50 & 9.1 & 11.8 & 7.0 & 84.0 & 83.8 \\
 \hline
 \multirow{3}{*}{Wikipedia} & 10 & 3.4 & 3.4 & 2.3 & 30.0 & 30.1 \\
 & 25 & 11.2 & 10.1 & 6.7 & 69.6 & 72.1\\
 & 50 & 17.0 & 19.7 & 12.9 & 148.2 &  157.5\\
 \hline
 \multirow{3}{*}{BlogCatalog} & 10 & 10.5 & 14.5 & 8.2 & 91.0 & 92.9 \\
 & 25 & 30.5 & 38.7 & 22.4 & 227.6 & 229.2\\
 & 50 & 58.3 & 74.7 & 50.5 & 458.8 & 464.6 \\
 \hline
\end{tabular}
\caption{Running time (in seconds) for feature generation from the 4-hop neighborhood for the five methods and different embeddings sizes, denoted as $d$.}
\label{tab:rtime}
\end{center}
\end{table}
\paragraph{Running time}
In Table~\ref{tab:rtime} we list the time needed to generate all node samples for varying number $d$ of samples per node.  We observe that random walks, NodeSketch and uniform ($L_0$) sampling are very efficient. This is because we keep a single sample per node during the $k$ iterations. $L_1$ and $L_2$ sampling are slower as we need to keep a summary of neighborhood nodes in order to detect a heavy hitter (the logarithmic factor in the running time in the theoretical analysis). Observe that the running time grows linearly with the embedding size, especially between 25 and 50 we have almost perfect doubling. Note that these are results obtained from a single-core implementation and since sample generation is easily parallelizable the approach can be significantly sped up by designing an advanced parallel architecture. 

\subsection{Link prediction}


\begin{figure}[ht!]
\centering
\begin{tabular}{cc}
  \includegraphics[width=60mm]{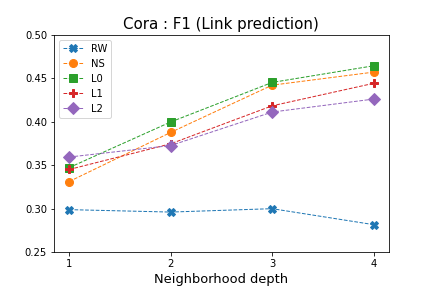} &   \includegraphics[width=60mm]{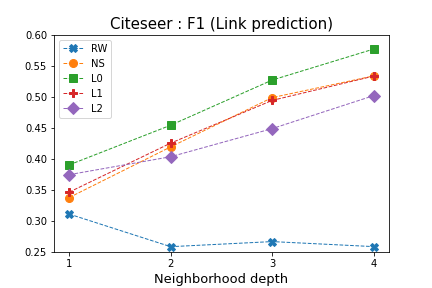} \\
 \includegraphics[width=60mm]{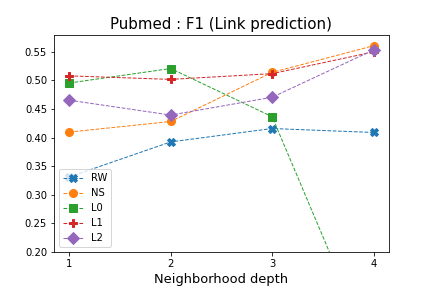} &   \includegraphics[width=60mm]{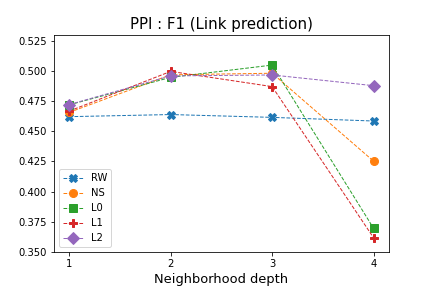} \\
 \includegraphics[width=60mm]{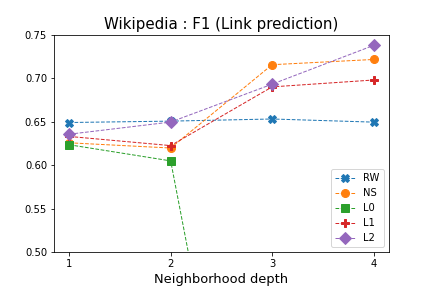} &   \includegraphics[width=60mm]{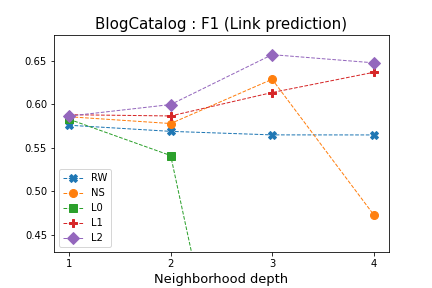}
\end{tabular}
\caption{$F_1$ scores for link prediction. Best viewed in color.} \label{fig:lp_f1}
\end{figure}
\begin{table*}
\begin{center}
\begin{tabular}{|c || cc || cc || cc |}
\multicolumn{1}{c}{} & \multicolumn{2}{c}{\large \texttt{Cora} } &  \multicolumn{2}{c}{\large \texttt{Citeseer}} &  \multicolumn{2}{c}{\large \texttt{Pubmed}}  \\ 
\hline
$k$ & COLOGNE & NodeSketch & COLOGNE & NodeSketch & COLOGNE & NodeSketch \\
\hline
\hline
1 &  0.359 ($L_2$) &  0.331 & 0.39 ($L_0$) &  0.338 &  0.508 ($L_1$) &  0.41\\
\hline
2 & 0.399 ($L_0$) &  0.387 & 0.454 ($L_0$) &  0.419 & 0.521 ($L_0$) &  0.428\\
\hline
3 & 0.445 ($L_0$) & 0.442 & 0.527 ($L_0$) &  0.499 & 0.512 ($L_1$) & 0.514\\
\hline
4 & {\bf 0.451} ($L_0$) & {\bf 0.458} & {\bf 0.578} ($L_0$) &  {\bf 0.532} & {\bf 0.553} ($L_2$) &  {\bf 0.558}\\
\hline
\end{tabular}
\end{center}
\vspace*{5mm}
%
\begin{center}
\begin{tabular}{|c || cc || cc || cc |}
\multicolumn{1}{c}{} & \multicolumn{2}{c}{\large \texttt{PPI} } &  \multicolumn{2}{c}{\large \texttt{Wikipedia}} &  \multicolumn{2}{c}{\large \texttt{BlogCatalog}}  \\ 
\hline
$k$ & COLOGNE & NodeSketch & COLOGNE & NodeSketch & COLOGNE & NodeSketch \\
\hline
\hline
1 & 0.472 ($L_2$) & 0.466 & 0.635 ($L_2$) &  0.626 &  0.588 ($L_1$) &  0.585\\
\hline
2 & 0.501 ($L_1$) &  0.497 &  0.649 ($L_2$)  &  0.62 &  0.6 ($L_2$) & 0.578\\
\hline
3 & {\bf 0.507} ($L_0$) & {\bf 0.498} &  0.693 ($L_2$) &  0.715 & {\bf 0.658} ($L_2$) & {\bf 0.627}\\
\hline
4 & 0.488 ($L_2$) & 0.425 &  {\bf 0.739} ($L_2$) &  {\bf 0.721} & 0.648 ($L_2$) &  0.473\\
\hline
\end{tabular}
\caption{Comparison of COLOGNE and NodeSketch for $F_1$. In bold font we give the best result for each dataset for COLOGNE and NodeSketch.}
\label{tab:F1_comp_other}
\end{center}
\end{table*}

The first set of experiments is for link prediction. We design a setting similar to the one in~\citep{node2vec}. We removed 20\% of the edges selected at random but such that the graph remains connected, and held them out as a test set. The rest of the edges together with four times more negative examples, i.e., pairs of nodes not connected by an edge, yield an imbalanced dataset with 20\% positive examples.  Each node $u$ is represented by a $d$-dimensional vector of discrete features $\mathbf{u} \in \mathbb{K}^d$ where $\mathbb{K}$ is the set of attributes describing nodes. We set $d=25$. For the citation networks the attributes are words describing the articles, and for PPI, Wikipedia and BlogCatalog we set the attribute to be the node itself, i.e. $\mathbb{K}=V$. For a pair of nodes $u$ and $v$ the input to a classification model is a $2d$-dimensional vector $z$ such that the entries in the $i$-coordinates in $u$ and $v$ are mapped to ``twin'' coordinates in $z$, i.e., the $i$-th coordinate in $u$ is mapped to the $2i$-th coordinate in $z$, and the $i$-th coordinate in $v$ is mapped to $(2i+1)$-th coordinate in $z$. For random walks we sampled an attribute at random, while for NodeSketch and COLOGNE we sampled an attribute in a coordinated way using the corresponding basic sampling algorithm where attributes share a random seed.  For example, in Pubmed nodes are described by a list of weighted features which are reweighted according to the algorithm used in COLOGNE or NodeSketch. 

We trained a Decision Tree classifier as it yields an explainable model. The splitting criterion is Gini index and the tree depth is unlimited.
The results in Figure~\ref{fig:lp_f1} show the $F_1$ scores for the six datasets for features collected at different neighborhood depths. Here are our observations:
\begin{itemize}
\item In general all coordinated sampling approaches yield comparable results. In Table~\ref{tab:F1_comp_other} we show that for some datasets COLOGNE is better than NodeSketch, so in downstream applications an optimal sampling approach might be considered a hyperparameter.
\item For networks of small diameter $L_0$ sampling yields very bad results for $k>2$. The reason is that all nodes end up with the same sample, namely the node with the smallest random seed since we disregard the graph structure. In contrast, $L_1$ and $L_2$ sampling are much less affected by the small diameter as we sample easily reachable neighborhood nodes. 
\item While NodeSketch yields overall good results its performance on BlogCatalog appears strange. We observe a sudden decrease of the predictive power of NodeSketch's embeddings when we increase the $k$-hop neighborhood from 3 to 4. Looking at the sampled values we observe that most nodes end up with identical samples. We can explain the behavior of $L_0$ sampling with the small diameter size but given the heuristic nature of NodeSketch it is challenging to make an educated guess why this happens for BlogCatalog but not for the Wikipedia graph which also has a small diameter.
\end{itemize}

\paragraph{Interpretability}

\begin{figure}
\centering
\begin{tabular}{cc}
  \includegraphics[width=60mm]{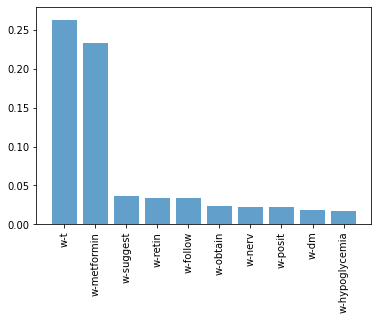} &   \includegraphics[width=60mm]{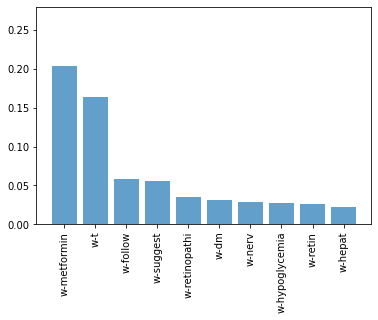} \\
\end{tabular}
\caption{Distribution of words in the most significant position according to the decision tree for $L_2$ sampling. On the left is the distribution for positive examples, i.e., nodes connected by an edge, in the right column is the distribution for negative examples.} \label{fig:distr}
\end{figure}

\begin{figure}
\scriptsize
\begin{verbatim}

Rules used to predict sample 1 with class 1:

16
decision node 0 : (X_test[1, 16] = ['w-t']) > 460.5)
17
decision node 9938 : (X_test[1, 17] = ['w-t']) > 460.5)
36
decision node 10958 : (X_test[1, 36] = ['w-young']) > 455.5)
37
decision node 11540 : (X_test[1, 37] = ['w-young']) > 496.5)
46
decision node 11938 : (X_test[1, 46] = ['w-bb']) <= 101.0)
35
decision node 11939 : (X_test[1, 35] = ['w-mous']) > 266.5)
47
decision node 11955 : (X_test[1, 47] = ['w-bb']) <= 85.0)
22
decision node 11956 : (X_test[1, 22] = ['w-inject']) > 235.0)
32
decision node 11978 : (X_test[1, 32] = ['w-experiment']) <= 341.5)
30
decision node 11979 : (X_test[1, 30] = ['w-observ']) > 62.0)
24
decision node 11985 : (X_test[1, 24] = ['w-cell']) <= 474.5)
11
decision node 11986 : (X_test[1, 11] = ['w-antigen']) <= 426.0)
...
prediction [1]


**********************************************

Rules used to predict sample 10000 with class 0:

16
decision node 0 : (X_test[10000, 16] = ['w-express']) <= 460.5)
17
decision node 1 : (X_test[10000, 17] = ['w-nerv']) <= 460.5)
30
decision node 2 : (X_test[10000, 30] = ['w-express']) > 62.0)
32
decision node 856 : (X_test[10000, 32] = ['w-beta']) > 56.0)
33
decision node 3178 : (X_test[10000, 33] = ['w-db']) > 67.5)
4
decision node 4358 : (X_test[10000, 4] = ['w-revers']) > 421.0)
5
decision node 6768 : (X_test[10000, 5] = ['w-revers']) > 421.0)
31
decision node 7462 : (X_test[10000, 31] = ['w-express']) > 62.0)
8
decision node 7530 : (X_test[10000, 8] = ['w-express']) <= 444.0)
9
decision node 7531 : (X_test[10000, 9] = ['w-progress']) <= 430.0)
44
decision node 7532 : (X_test[10000, 44] = ['w-express']) > 21.0)
45
decision node 7760 : (X_test[10000, 45] = ['w-femal']) > 37.5)
...
prediction [0]

\end{verbatim}
\caption{The decision rules for a positive and a negative example.}
\label{fig:rules}
\end{figure}

  \begin{figure}
\centering
\begin{tabular}{cc}
  \includegraphics[width=56mm]{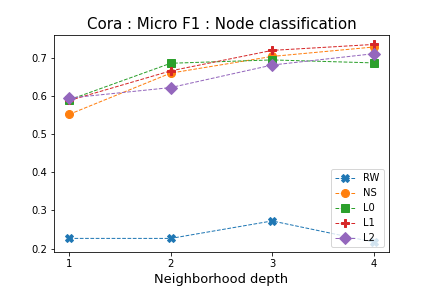} & \includegraphics[width=56mm]{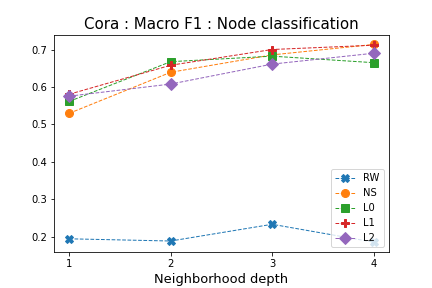} \\
   \includegraphics[width=56mm]{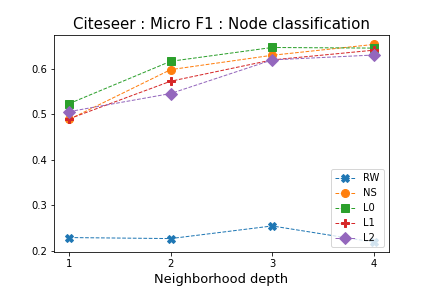} & \includegraphics[width=56mm]{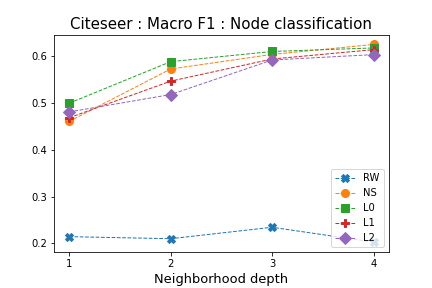}\\
 \includegraphics[width=56mm]{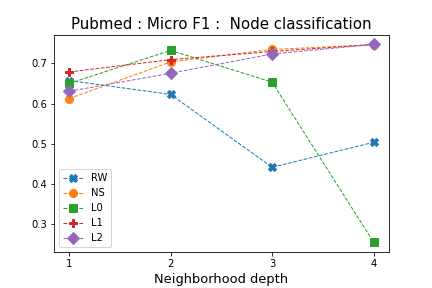} &  \includegraphics[width=56mm]{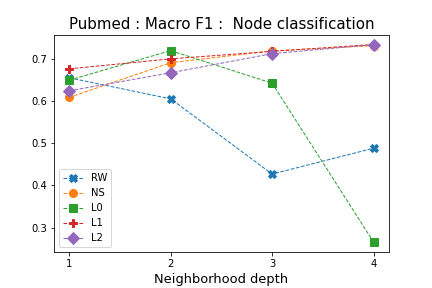} \\
 \includegraphics[width=56mm]{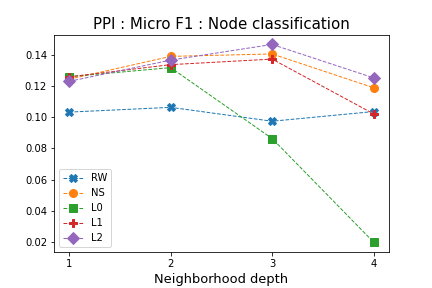} & \includegraphics[width=56mm]{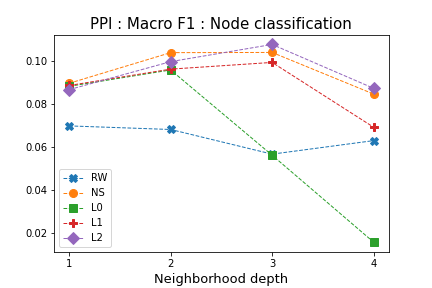}\\
 \includegraphics[width=56mm]{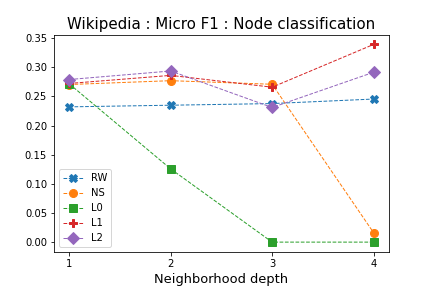} & \includegraphics[width=56mm]{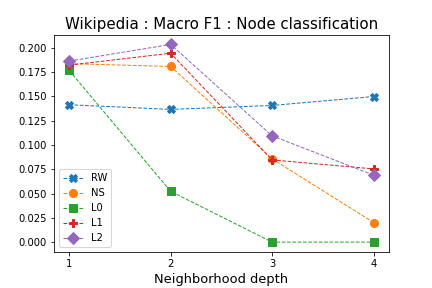} \\
  \includegraphics[width=56mm]{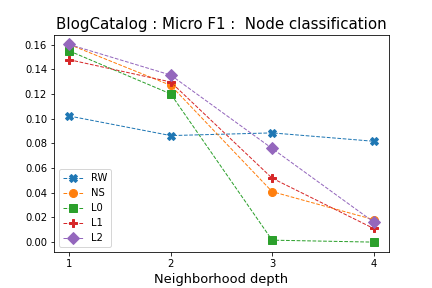} & \includegraphics[width=56mm]{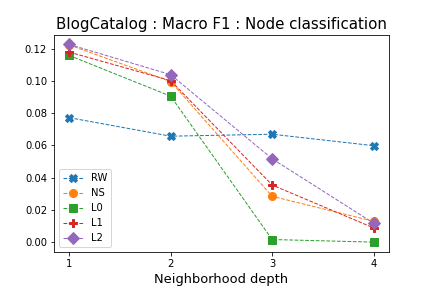}
\end{tabular}
\caption{Micro and Macro $F_1$ scores for node classification. Best viewed in color.}
\label{fig:clf_micro}
\end{figure}

A major advantage of coordinated sampling is that it yields interpretable node embeddings. First of all note that in all experiments the Decision tree models performed their first two splits on twin coordinates in the input vector, i.e., the indices at position $2i$ and $2i+1$ which represent the $i$-th samples for two nodes and are likely to contain identical samples for nodes with similar neighborhoods. In Figure~\ref{fig:distr} we plot the distribution of sampled words in the $L_2$ embeddings for the most significant position according to the decision tree model. On the left is the distribution for positive examples, and on the right is the distribution for negative examples. (Note that the distribution of the twin position is almost identical.) We observe clear differences between positive and negative examples which can help human experts to understand better the data.  

In Figure~\ref{fig:rules} we show the decision path for a positive and a negative example. (Note that the numeric values come from integer encoding of the word indices as computed by sklearn's LabelEncoder. Each leaf in the decision tree thus contains a small number of words.) We observe that in the positive example the two words in the top twin positions are identical while this is not the case for the negative examples. This is repeated at several deeper branches in the tree.

\subsection{Node classification}

\begin{figure}
\centering
\begin{tabular}{ccc}
  \includegraphics[width=45mm]{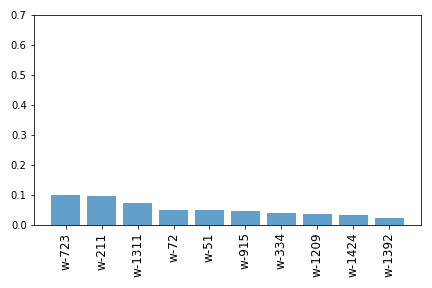} &  \includegraphics[width=45mm]{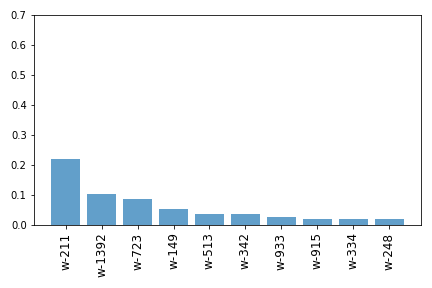} & \includegraphics[width=45mm]{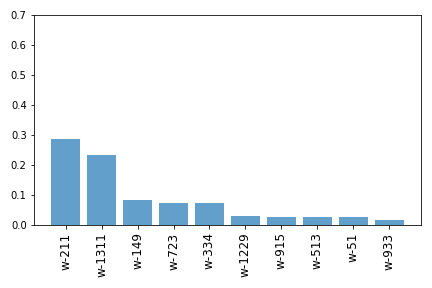}
\end{tabular}
\caption{Distribution of words for a coordinate of the embedding vectors for each of the three node classes of Pubmed.} \label{fig:pubmed_distr}
\end{figure}

We next consider the problem of node classification using the provided node labels as described in Table~\ref{tab:datainfo}. Using the same setting as for link prediction, each node is represented by a 50-dimensional vector such that the $i$-th coordinate is the attribute of a sampled node. We again train a decision tree classifier. We trained a one-vs-rest classifier on top of the decision tree model.   We split the data into 80\% for training and 20\% for testing and report average micro-$F_1$ and macro-$F_1$ scores for 10 independent runs of the algorithm in Figures~\ref{fig:clf_micro}. As expected, random walks never achieve better results than coordinated sampling. Analyzing the results we make similar observations as for link prediction. For graphs of small diameter $L_0$ sampling is only useful up to $k=2$. The results of COLOGNE and NodeSketch are in general comparable but there is again an example where NodeSketch's behavior is confusing. On the Wikipedia graph NodeSketch's  micro-$F_1$ score suddenly drops when adding one more iteration for sampling. The label distribution on Wikipedia is highly skewed. We can argue that $L_1$ and $L_2$ sampling generate embeddings that enable us to learn the most common labels and become better with increasing $k$ as they sample more representative nodes, hence the micro-$F_1$ scores improve but the macro-$F_1$ become worse. But we don't know how to explain the sudden decrease in both micro-$F_1$ and macro-$F_1$ values obtained when using NodeSketch's embeddings. 

The result are again interpretable for a human expert. In Figure~\ref{fig:pubmed_distr} we show how the different distribution of node attributes for the three node classes, for one coordinate of the embedding vector. Note that there is no such skew in the random walks embeddings.

\subsection{Comparison with continuous node embeddings}

We do not compare against continuous embeddings as these are not interpretable and the focus of the paper is to present and evaluate approaches to coordinated sampling. Such a comparison can be found in~\citep{nodesketch} where NodeSketch is compared against different approaches to the generation of continuous embeddings. 
However, we would like to mention that the results in~\citep{nodesketch} are obtained by training an SVM model with custom kernel where the kernel is a precomputed Gram matrix with the Hamming distance between node pairs. In our opinion it is questionable if such an approach is really feasible for dealing with large graphs. It is likely that one would need to consider $\Omega(n)$ nodes in order to train a good model. A Gram matrix with $O(n^2)$ entries can easily become a computational bottleneck. As evident from the theoretical analysis, the main effort in our design of COLOGNE is to guarantee that samples can be generated in time linear in the graph size.

We chose such a basic prediction model as decision trees are the textbook example for interpretable machine learning. It is worth noting that more accurate models can be obtained by using more advanced classification models. For example, using LightGBM~\citep{light_gbm}, a highy efficient framework for Gradient Tree Boosting. However, such results can be highly dependent on hyperparameter tuning and it becomes challenging to guarantee a fair comparison between the different sampling approaches. Moreover, the models become much more difficult to interpret as in order to achieve good results one needs hundreds and sometimes thousands of trees for an optimal model. 

\subsection{Unsupervised learning}
Finally, we would like to provide an example how coordinated sampling can be used in an unsupervised setting. 
 For each node in Pubmed we sampled 100 such words using $L_1$ sampling and random walks and, as there are 500 unique words (see Table~\ref{tab:datainfo}), converted them to sparse binary vectors of dimensionality 50000. These vectors were clustered using scikit-learn's K-means into 4 clusters using ``k-means++'' for initialization. We collected the 6 most frequently appearing words in the node descriptions in each cluster and plot the normalized word frequencies in Figure~\ref{fig:clusters}. Using $L_1$ sampling in 3 of the 4 clusters we have clearly dominating words while this is not the case for random walk based vectors. (Note that the $x$-axis is on a logarithmic scale.) 

\begin{figure*}
\centering
\begin{tabular}{c}
\includegraphics[scale=.4]{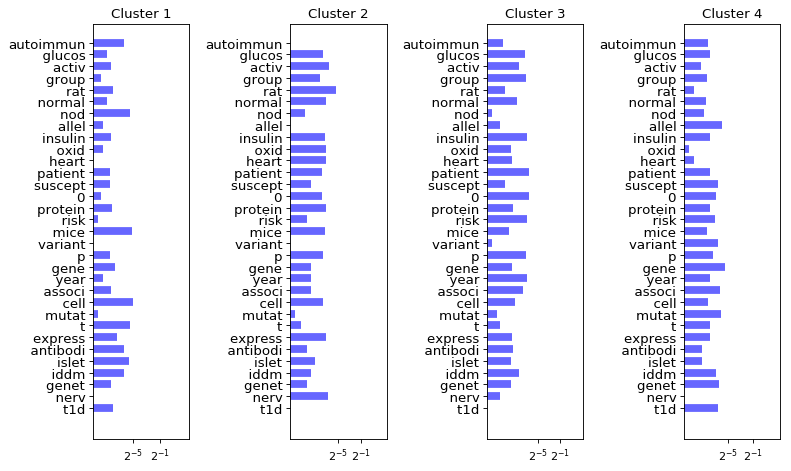} \\
\includegraphics[scale=.4]{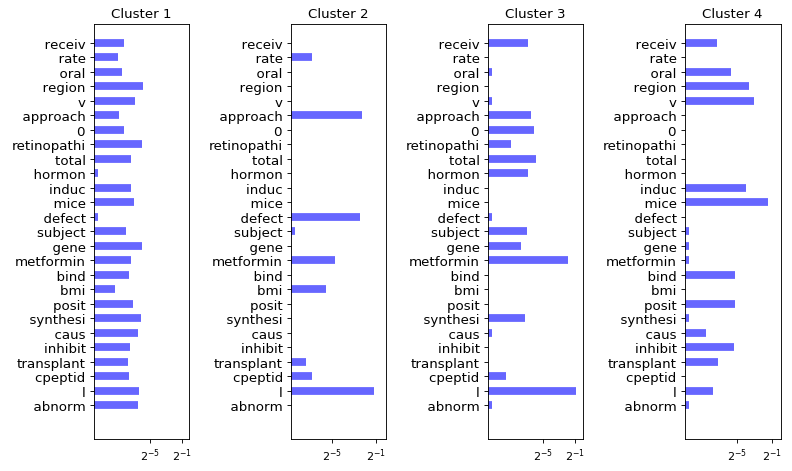}
\end{tabular}
\caption{Relative weight of the most significant words per cluster using vectors generated by random walk sampling (top) and $L_1$ sampling (bottom) on a logarithmic scale.}
\label{fig:clusters}
\end{figure*}


\section{Conclusions and future work} \label{sec:concl}

The paper lays the theoretical foundations for coordinated local graph sampling and argues that the approach can find practical applications by presenting experiments on real-life graphs. A deeper look into the graph structure can yield more insights into graph representation learning. We made certain observations about the graph diameter and the performance of the different sampling strategies, yet more advanced concepts from graph theory will likely lead to a better understanding of the performance of the algorithms. 

We would like to pose two open questions:
\begin{enumerate}
\item Can we design {\em interpretable} node embeddings for known similarity measures between the local neighborhood frequency vectors? For example, we can design a sketching algorithm to approximate the cosine similarity between $k$-hop frequency vectors of node pairs. Such an approach would build on the techniques we used in the proofs of Theorem~3 and Theorem~4. But sketches will consist of counters, not of sampled nodes, so they won't be interpretable.
\item Can we learn structural roles~\citep{struct_roles} by assigning appropriate node attributes? In particular, can we combine the sampling procedure with approaches for graph labeling such that we compute more informative node representations?
\end{enumerate}



%
%

\bibliographystyle{spbasic}      
\bibliography{cologne.bib}   


\end{document}